\newtheorem{definition}{\bf Definition}
\newtheorem{proposition}{\bf Proposition}
\newtheorem{assumption}{\bf Assumption}
\newtheorem{thm}{\bf Theorem}
\newtheorem{lem}[thm]{\bf Lemma}
\title{Boosting Efficiency in Task-Agnostic Exploration through Causal Knowledge}
\author{
Yupei Yang$^1$
\and
Biwei Huang$^{2}$\footnote{corresponding author}\and
Shikui Tu$^{1}$\footnotemark[1]\And
Lei Xu$^{1,3}$
\affiliations
$^1$Department of Computer Science and Engineering, Shanghai Jiao Tong University, China\\
$^2$Halicioğlu Data Science Institute (HDSI), University of California San Diego, USA\\
$^3$Guangdong Institute of Intelligence Science and Technology, Zhuhai, China
\emails
\{yupei\_yang, tushikui, leixu\}@sjtu.edu.cn,
bih007@ucsd.edu \\
}
\begin{document}

\maketitle

\begin{abstract}
    The effectiveness of model training heavily relies on the quality of available training resources. However, budget constraints often impose limitations on data collection efforts. To tackle this challenge, we introduce \textit{causal exploration} in this paper, a strategy that leverages the underlying causal knowledge for both data collection and model training. We, in particular, focus on enhancing the sample efficiency and reliability of the world model learning within the domain of task-agnostic reinforcement learning. During the exploration phase, the agent actively selects actions expected to yield causal insights most beneficial for world model training. Concurrently, the causal knowledge is acquired and incrementally refined with the ongoing collection of data. We demonstrate that causal exploration aids in learning accurate world models using fewer data and provide theoretical guarantees for its convergence. Empirical experiments, on both synthetic data and real-world applications, further validate the benefits of causal exploration. The source code is available at \href{https://github.com/CMACH508/CausalExploration}{https://github.com/CMACH508/CausalExploration}.
\end{abstract}

\section{Introduction}
Deep neural networks have been incredibly successful in various domains, such as milestone achievements in Go games and control tasks \cite{silver-Go-2016,tassa-deepmind-2018}. One key factor contributing to such remarkable performance is the availability of high-quality data for model training. However, in many practical applications, it remains data-hungry due to limited data collection efforts imposed by budget constraints \cite{fang-learning-2017,yoo-learning-2019,robine-transformer-2023}. This highlights the essential need to enhance both the sampling and model learning efficiency.

In this paper, we introduce \textit{causal exploration} to tackle this challenge, a novel framework that makes use of the underlying causal knowledge to boost the data collection and model training processes. On one hand, acquiring and understanding causal knowledge unveils the fundamental mechanisms behind the data generation process, thereby reducing the exploration space. In contrast to random data collection, causal exploration allows systematic action planning based on the identified causal structures. On the other hand, causal knowledge reflects the cause-and-effect dependencies among variables. Through the incorporation of causal structural constraints into the model, we can acquire causal dynamics models that eliminate redundant dependencies, as opposed to non-causal dense models, which have shown to provide more accurate estimations \cite{seitzer-causal-2021,huang-action-2022,wang-causal-2022}.

Specifically, we focus on boosting the sample efficiency and reliability of the world model in the realm of task-agnostic reinforcement learning (RL). Different from methods that learn a fixed task from scratch, task-agnostic RL agent first learns a global world model that gathers information about the true environment on the data collected during exploration. Then based on the predictions of the learned model, the agent could make quick adaptations to various downstream tasks in a zero-shot manner given task-specific reward functions. This learning setup exhibits excellent generalization performance but also imposes high requirements on the accuracy of the model \cite{pathak-self-2019}. However, the data collection and world model learning processes are usually expensive due to extensive environment interactions, especially in large state spaces where discovering the optimal policy can be highly challenging \cite{burda-large-2018}.

To address this problem, our causal exploration-based approach revolves around three primary aspects. First, we employ constraint-based methods to discover causal relationships among environment variables. Second, we formulate the dynamics model under causal structural constraints to enhance its reliability. Third, we propose several ways based on causal knowledge to improve the sample efficiency during exploration. In particular, for causal discovery, we present an efficient online method that selectively eliminates noisy samples and strategically gathers informative data points in an incremental way. Moreover, we learn to actively explore towards novel states that are expected to contribute most to model training. The learning process of the exploration policy is driven by intrinsic rewards, which measure both the agent's level of surprise at the outcome and the quality of the training caused by the selected data. During exploration, causal knowledge and the world model are continuously refined with the ongoing collection of data. Our key contributions are summarized below.
\begin{itemize}
  \item In order to enhance the sample efficiency and reliability of model training with causal knowledge, we introduce a novel concept: causal exploration, and focus particularly on the domain of task-agnostic reinforcement learning.
  \item To efficiently learn and use causal structural constraints, we develop an online method for causal discovery and formulate the world model with explicit structural embeddings. During exploration, we train the dynamics model under a novel weight-sharing-decomposition schema that can avoid additional computational burden.
  \item Theoretically, we show that, given strong convexity and smoothness assumptions, our approach attains a superior convergence rate compared to non-causal methods. Empirical experiments further demonstrate the robustness of our online causal discovery method and validate the effectiveness of causal exploration across a range of demanding reinforcement learning environments.
\end{itemize}

\section{Preliminary}
We consider task-agnostic RL within the framework of a Markov decision process characterized by state space $\mathcal{S}$ and action space $\mathcal{A}$. In addition, to integrate causal information, we make the following assumptions throughout this paper.
\begin{assumption}
    (Causal Factorization). 
    Both the state space and action space can be factorized. That is, $\mathcal{S} = \mathcal{S}_1 \times \ldots \times \mathcal{S}_n \in \mathbb{R}^n$ and $\mathcal{A} = \mathcal{A}_1 \times \ldots \times \mathcal{A}_c \in \mathbb{R}^c$.
\end{assumption}
\begin{assumption}
    (Causal Sufficiency). 
    The state variables $\boldsymbol{s}_t$ are fully observable without any hidden confounders.
\end{assumption}
\begin{assumption}
    (Faithfulness Condition). 
    For a causal graph $\mathcal{G}$ and the associated probability distribution $P$, every true conditional independence relation in $P$ is entailed by the Causal Markov Condition applied to $\mathcal{G}$.
\end{assumption}
Given these commonly made assumptions for causal discovery methods, we next define \textit{transition causality} over the transition variables from $\{\boldsymbol{s}_{t-1}, \boldsymbol{a}_{t-1}\}$ to $\boldsymbol{s}_t$.
\begin{definition}\label{def:causality}
    (Transition Causality). 
    Under the Markov condition, the causal structures are over the state-action variables $\mathcal{U} = \{\mathcal{S}_{i,t-1}\}_{i=1}^n \cup \{\mathcal{A}_{j,t-1}\}_{j=1}^c$ and $\mathcal{V} = \{\mathcal{S}_{i,t}\}_{i=1}^n$, which can be represented by a directed acyclic graph $\mathcal{G} = (\{ \mathcal{U}, \mathcal{V}\}, \mathcal{E})$ and its adjacency matrix $D$. Here, $\mathcal{E}$ denotes the edge set and $D \in {\{0, 1\}}^{|\mathcal{U}| \times |\mathcal{V}|}$.
\end{definition}

Note that all edges are from $\mathcal{U}$ to $\mathcal{V}$. If $s_{i,t-1} \in \mathcal{U}$ has a causal edge to $s_{j,t} \in \mathcal{V}$, then we call $s_{i, t-1}$ a parent of $s_{j,t}$ and have $D(i,j)=1$. Take the example in Figure \ref{fig:causality}: we have $D(1, 1) = 1$ because $s_{1,t-1}$ is a parent of $s_{1,t}$, while $D(2,1)=0$ because $s_{2,t-1}$ does not have a causal edge to $s_{1,t}$. Here, we assume that the structural constraints are invariant over time $t$. The causal identification theory under these appropriate definitions and assumptions has been given in existing work.
\begin{proposition}
    Under the aforementioned assumptions, the causal adjacency matrix $D$ is identifiable (see \cite{huang-adarl-2021,wang-causal-2022,ding-generalizing-2022}).
\end{proposition}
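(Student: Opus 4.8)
The plan is to reduce identifiability of $D$ to identifiability of the graph skeleton, and then invoke the standard constraint-based argument under causal sufficiency and faithfulness. First I would observe that, by Definition~\ref{def:causality}, every edge of $\mathcal{G}$ is directed from $\mathcal{U}$ (the variables at time $t-1$) to $\mathcal{V}$ (the variables at time $t$); in particular there are no edges within $\mathcal{U}$ nor within $\mathcal{V}$. Hence every node in $\mathcal{U}$ is a root and every node in $\mathcal{V}$ is a sink, and within the model class of transition DAGs the bipartite temporal layering already forces the orientation of every present edge. Since two DAGs are Markov equivalent iff they share the same skeleton and the same v-structures, any transition DAG Markov-equivalent to $\mathcal{G}$ must have the same skeleton, and the layering then makes it equal to $\mathcal{G}$. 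So it suffices to show the skeleton --- equivalently, the $0/1$ pattern of $D$ --- is recoverable from the observational distribution $P$ over $\mathcal{U}\cup\mathcal{V}$.

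Next I would characterize each entry of $D$ by a single conditional independence test. Fix $U_i\in\mathcal{U}$ and $V_j\in\mathcal{V}$ and let $Z=\mathcal{U}\setminus\{U_i\}$; the claim is that $D(i,j)=1$ iff $U_i\not\perp_P V_j\mid Z$. For the ``only if'' direction, if $U_i\to V_j$ is an edge, then under faithfulness this adjacency cannot be d-separated by any set, so in particular $U_i\not\perp_P V_j\mid Z$. For the ``if'' direction I would argue by d-separation in $\mathcal{G}$: any path from $U_i$ to $V_j$ must leave $U_i$ through some edge $U_i\to V_k$ (as $U_i$ has no parents); if $k=j$ we are in the edge case, and otherwise $V_k$ is a collider on the path (it has no children), is not in $Z$, and has no descendants at all, so the path is blocked; any path that instead first steps to another $\mathcal{U}$-node is blocked at that node since $Z$ contains all of $\mathcal{U}\setminus\{U_i\}$. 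Hence $D(i,j)=0$ implies $U_i\perp_P V_j\mid Z$ by the Markov property. Because the structural constraints are invariant in $t$, samples from all transitions can be pooled to estimate these independence relations consistently, recovering $D$.

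Finally I would assemble the pieces: the conditional-independence oracle determines the skeleton via the above characterization, the temporal layering orients it, and the result is exactly $\mathcal{G}$ together with its adjacency matrix $D$; this is the specialization to the two-slice transition model of the identification results of \cite{huang-adarl-2021,wang-causal-2022,ding-generalizing-2022}. The step I expect to be the most delicate is the ``if'' direction of the characterization --- verifying that conditioning on the \emph{whole} set $\mathcal{U}\setminus\{U_i\}$ (rather than on some cleverly chosen subset, as a generic PC-style search would require) already blocks every non-adjacency path; this rests crucially on the $\mathcal{V}$-nodes being sinks and hence always acting as colliders. A secondary subtlety worth flagging is that faithfulness must genuinely hold for the state--action distribution: if the exploration policy makes an action a deterministic function of the state, the induced distribution can be unfaithful and the test-based recovery of the corresponding entries of $D$ would fail --- which is precisely what Assumptions~2--3 are invoked to exclude.
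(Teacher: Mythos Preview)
Your argument is correct and in fact more detailed than what the paper itself supplies: the paper does not prove this proposition at all but simply defers to the cited references \cite{huang-adarl-2021,wang-causal-2022,ding-generalizing-2022}, treating identifiability as a known consequence of the stated assumptions. What you have written is a clean self-contained specialization of the constraint-based argument to the bipartite two-slice setting, and the d-separation reasoning in your ``if'' direction is sound---every non-trivial path from $U_i$ to $V_j$ must pass through a $\mathcal{V}$-collider that is neither in $Z\subseteq\mathcal{U}$ nor has any descendants, so it is blocked. Two minor remarks: the case ``the path first steps to another $\mathcal{U}$-node'' is vacuous since the model has no intra-$\mathcal{U}$ edges, and the Markov-equivalence detour in your first paragraph is unnecessary once you observe that the temporal layering already orients every skeleton edge directly. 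As a side note, in this strictly bipartite model the same characterization holds even with $Z=\emptyset$, since the first $\mathcal{V}$-collider blocks every indirect path regardless of conditioning; your choice $Z=\mathcal{U}\setminus\{U_i\}$ is valid but not minimal.
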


\section{Related Work}
\paragraph{Task-agnostic RL.}
Over recent decades, task-agnostic exploration strategies have been an active research area to attain generalization \cite{aubret-survey-2019}. Existing methods focus on designing appropriate forms of intrinsic rewards, which can be broadly categorized into three types: (1) the number of times a state has been visited, which helps to guide the agent towards unexplored regions \cite{bellemare-unifying-2016,machado-count-2020}; (2) curiosity about the environment dynamics, which is usually formalized as prediction errors of future states \cite{pathak-curiosity-2017,kim-emi-2018}; and (3) information gain, which aims to improve the agent's knowledge about the environment by maximizing the mutual information \cite{duan-benchmarking-2016,shyam-model-2019}. However, existing works usually pay little attention to the support that causal structures can offer in improving exploration efficiency.

\paragraph{RL with causal discovery.}
The intersection of causal discovery and RL has become a popular trend in recent years. \cite{zhu-causalrl-2019} uses RL to search for the causal graph with the best score. \cite{peng-causality-2022} proposes to learn a hierarchical causal structure for subgoal-based policy learning. \cite{ding-generalizing-2022} and \cite{mutti-2023-provably} focus on providing tractable formulations of systematic generalization in RL tasks by employing a causal viewpoint. \cite{yu-explainable-2023} learns a causal world model to generate explanations for the decision-making process. Theoretical evidence about the advantages of using a causal world-model in offline RL is given in \cite{zhu-2022-offline}. However, most of these works focus on either extracting underlying causal graphs from given data in a particular environment or using random exploration strategies for data collection, instead of directly utilizing causal structure as guidance to improve exploration efficiency in task-agnostic RL \cite{kosoy-learning-2022}. Experimental comparisons with some existing works that are close to us have been conducted in Section \ref{sec:exp}.

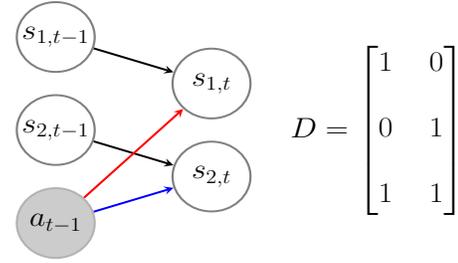
\begin{figure}[tbp]
    \centering
    \resizebox{.72\linewidth}{.4\linewidth}{
    \begin{tikzpicture}[mycircle/.style={circle,draw=black!50,fill=white!20,very thick, minimum size=15mm, inner sep=0pt, font=\fontsize{18}{18}\selectfont},myfirstnode/.style={circle,draw=black!30,fill=black!20,very thick, minimum size=15mm, inner sep=0pt, font=\fontsize{18}{18}\selectfont}]
        \node at (0.2, 0) {\huge $D = \begin{bmatrix} 1 && 0 \\ && \\0 && 1 \\ && \\ 1 && 1 \end{bmatrix}$}; 
        \node (n21) at (-3,1) [mycircle] { $ s_{1, t}$};
        \node (n22) at (-3,-1) [mycircle] {$ s_{2, t}$};
        \node (s21) at ($(n22)+(-3,3)$) [mycircle] {$ s_{1, t-1}$}edge[->, bend angle=0, draw=black, >=stealth, line width=1.2pt](n21) ;
        \node (s22) at ($(n22)+(-3,1)$) [mycircle] {$ s_{2, t-1}$}edge[->, bend angle=0, draw=black, >=stealth, line width=1.2pt](n22);
        \node (s24) at ($(n22)+(-3,-1)$) [myfirstnode] {$ a_{t-1}$}edge[->, bend angle=0, draw=red, >=stealth, line width=1.2pt](n21) edge[->, bend angle=0, draw=blue, >=stealth, line width=1.2pt](n22);
    \end{tikzpicture}
    }
    \caption{An illustration of causal relationships from $\mathcal{U}=\{s_{1,t-1}, s_{2,t-1}, a_{t-1}\}$ to $\mathcal{V}=\{s_{1,t}, s_{2,t}\}$ in the RL system.}
    \label{fig:causality}
\end{figure}

\section{Discovering and Utilizing Causality for Learning World Models}
After establishing proper assumptions and definitions, we proceed to introduce the methodology part for causal exploration. In this section, we initially assume that the causal structures are known and show how to explicitly incorporate causal knowledge into the world model and utilize it for model training. Next, we give an estimation procedure for causal structures.

\subsection{Causal Constraints for Forward Model}
During task-agnostic exploration, the agent learns a world model $f_{\boldsymbol{w}^c}$ with parameter $\boldsymbol{w}^c$, serving as an abstraction of the ground truth transitions in the environment. In other words, the world model $f_{\boldsymbol{w}^c}$ is designed to enable agents to predict future state $\hat{\boldsymbol{s}}_{t}$ based on current state $\boldsymbol{s}_{t-1}$ and action $\boldsymbol{a}_{t-1}$, represented by $\hat{\boldsymbol{s}}_t = f_{\boldsymbol{w}^c}(\boldsymbol{s}_{t-1}, \boldsymbol{a}_{t-1}, \boldsymbol{e}_t)$, where $\boldsymbol{e}_t$ is the corresponding random noise. However, based on the understanding that causal structures within environmental variables are typically sparse rather than dense, as suggested by \cite{huang-action-2022}, such a framework could contain unnecessary dependencies. For instance, in the context of Figure \ref{fig:causality}, the variable $s_{2,t-1}$ does not causally affect $s_{1,t}$, and is thus identified as a non-parent node. Consequently, we only need to take a subset of $(\boldsymbol{s}_{t-1}, \boldsymbol{a}_{t-1})$ as inputs of the model.

\begin{figure}[tbp]
    \centering
    \includegraphics[width=\linewidth]{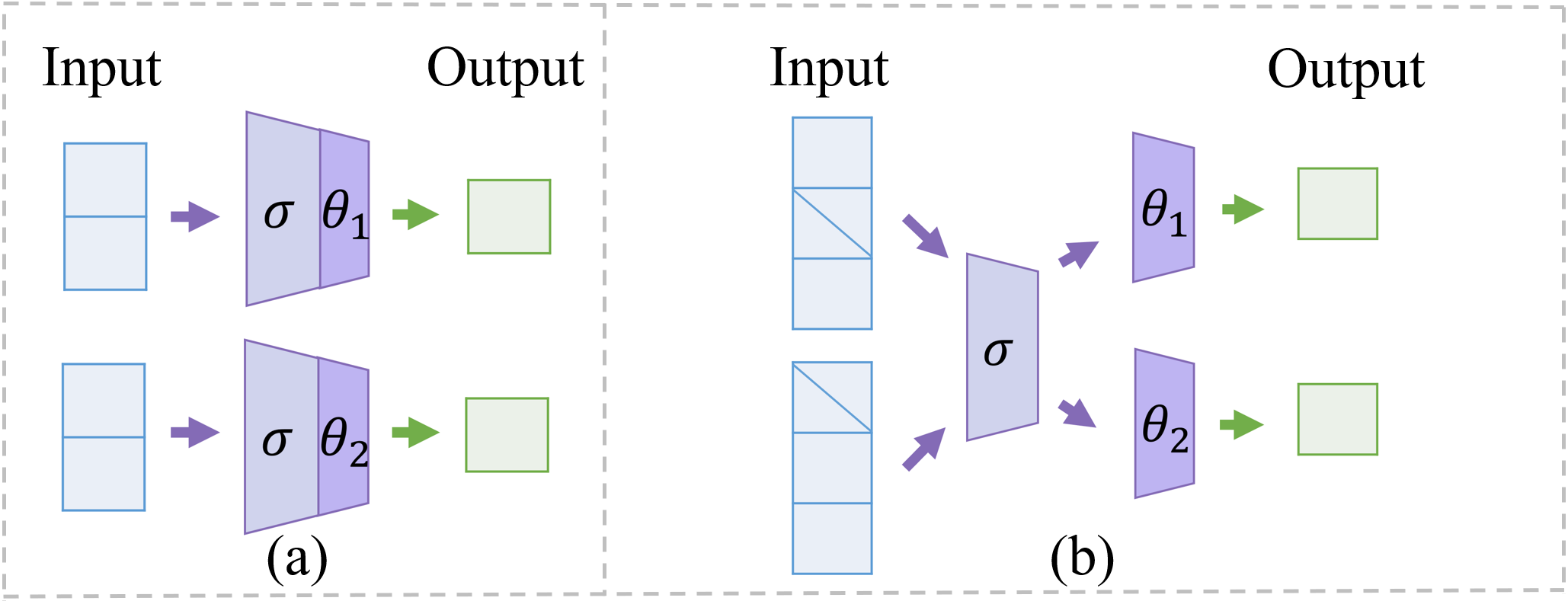}
    \caption{An illustration of model architectures under (a) naive causal factorization; (b) our sharing-decomposition schema.}
    \label{fig:sharing}
\end{figure}

To reflect these constraints, we explicitly consider the causal structures over state and action variables to model the one-step transition dynamics, formulated as:
\begin{equation}
    \begin{aligned}
       \hat{\boldsymbol{s}}_t = \prod_{i=1}^n f^i_{\boldsymbol{w}^c} (D_i \odot (\boldsymbol{s}_{t-1}, \boldsymbol{a}_{t-1}), \boldsymbol{e}_{i,t}),
    \end{aligned}
\end{equation}
where $\odot$ denotes element-wise product, $D_i$ and $\boldsymbol{e}_{i,t}$ are the $i$-th column of causal matrix $D$ and noise term $\boldsymbol{e}_t$, respectively. However, a naive implementation requires training $n$ world models since each factored dimension has its unique parents. It is likely to result in an explosive growth in computational complexity as the state dimension and network size increase. 

We propose a sharing-decomposition schema to address such a problem. It is unnecessary for all of these $n$ networks to be totally different. Instead, each of these models could share the first several layers as a common embedding. Then following the sharing module, we design predictive networks for each dimension. Suppose $\boldsymbol{w}_i^c$ is the network parameter for the $i$-th dimension, it is a combination of the shared parameter $\boldsymbol{\sigma}$ and decomposed parameter $\boldsymbol{\theta}_i$, written as
\begin{equation}
    \boldsymbol{w}_i^c = \boldsymbol{\sigma} \cup \boldsymbol{\theta}_i.
\end{equation}
During the training time, each model focuses on a different aspect of the state in the decomposition part $\boldsymbol{\theta}_i$ but shares a common knowledge $\boldsymbol{\sigma}$. The number of shared layers is a hyperparameter that allows for a trade-off between the sharing and decomposition parts. By training forward models under this schema, our approach can both utilize causal information of the ground environment dynamics to generate accurate predictions and achieve a significant reduction in model parameters and computation time compared with naive decomposition. Figure \ref{fig:sharing} illustrates our sharing-decomposition schema during causal exploration. Moreover, we use the prediction error as the loss function for the optimization of $\boldsymbol{w}^c$:
\begin{equation}\label{equ:pred_loss}
  \begin{aligned}
    L_f(\boldsymbol{w}^c) = \frac12 \| \hat{\boldsymbol{s}}_{t} - \boldsymbol{s}_{t} \|^2_2.
  \end{aligned}
\end{equation}

\subsection{Efficient Online Causal Relationship Discovery}\label{sec:causal}
In this section, we show how to identify the causal adjacency matrix $D$. According to Definition \ref{def:causality}, this can be transformed into determining whether there exists an edge between each pair of nodes in the causal graph $\mathcal{G}$. To achieve this goal, we start from a complete graph and then iteratively remove unnecessary edges using Conditional Independence Tests (CIT). Given that the edges follow the temporal order without instantaneous connection, we extend the PC algorithm \cite{spirtes-causation-2000} to handle time-lagged causal relationships based on Kernel-based Conditional Independence (KCI) test \cite{zhang-kernel-2012} to identify the causal adjacency matrix $D$.

\begin{figure}
    \centering
    \includegraphics[width=\linewidth]{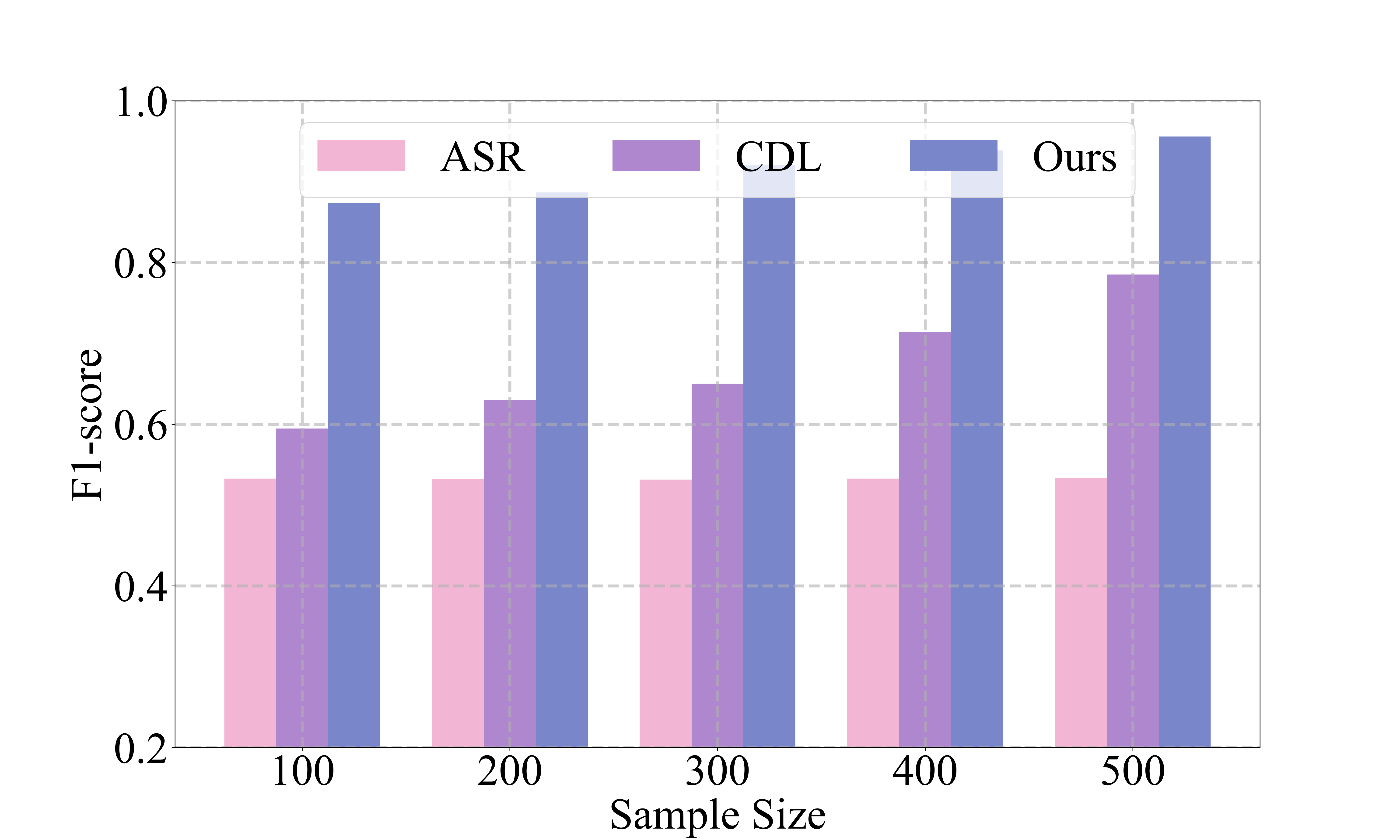}
    \caption{Average F1-score of the identified causal graph compared to the ground truth graph across various causal discovery methods. More details are given in Section \ref{sec:exp}.}
    \label{fig:graph_score}
\end{figure}

\begin{figure*}[htbp]
  \centering
  \includegraphics[width=\textwidth]{./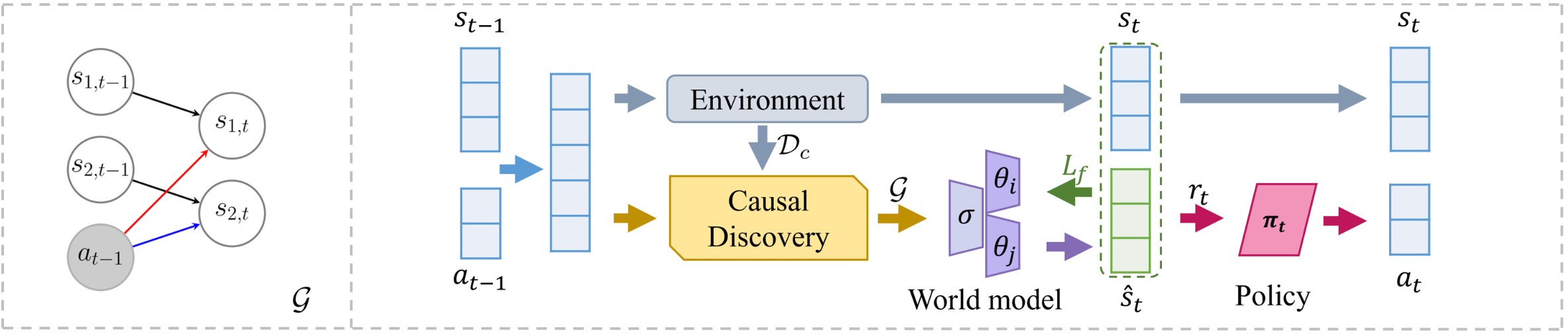}
  \caption{An overview of Causal Exploration framework. Throughout the process, the agent, guided by policy $\pi_t$, engages in exploration to gather data that are most beneficial for model training. Meanwhile, causal knowledge and the world model are continuously refined with the ongoing data collection.}
  \label{fig:causalnet}
\end{figure*}

While causal discovery algorithms typically necessitate the collection of substantial causal information through data, it's important to note that accumulating more samples does not always confer an advantage: as the sample size increases, the time cost of causal algorithms also rises. Figure \ref{fig:pc-time} in the appendix illustrates that the execution time of the PC algorithm based on KCI tests experiences exponential growth as sample size increases. Therefore, prioritizing the enhancement of data quality over quantity becomes paramount. In order to reduce the cost of the identification process, we design an efficient online causal relationship discovery method: instead of using all of the coming data for causal identification, we selectively collect representative data points during exploration in an incremental way. Specifically, we use the minibatch similarity and sample diversity criteria introduced in \cite{yoon-online-2021} as our selection strategies, which are defined as
\begin{equation}
\begin{aligned}
    \text{Similarity} &= \frac{\nabla f_{\boldsymbol{w}^c}({b^i_t}) \overline{\nabla}f_{\boldsymbol{w}^c}({\mathcal{B}_t})^\top}{\| \nabla f_{\boldsymbol{w}^c}({b^i_t}) \| \cdot  \|\overline{\nabla}f_{\boldsymbol{w}^c}({\mathcal{B}_t}) \|}, \\
    \text{Diversity} &= \frac{-1}{t - 1} \sum_{p \neq i}^{t -1} \frac{\nabla f_{\boldsymbol{w}^c}({b^i_t}) {\nabla}f_{\boldsymbol{w}^c}({b^p_t})^\top}{\| \nabla f_{\boldsymbol{w}^c}({b^i_t}) \| \cdot \|{\nabla}f_{\boldsymbol{w}^c}({b^p_t}) \|}.
\end{aligned}
\end{equation}
Here ${b^i_t}$ is the $i$-th data of the whole arrival batch ${\mathcal{B}_t}$. $\nabla f_{\boldsymbol{w}^c}({b^i_t})$ and $\overline{\nabla}f_{\boldsymbol{w}^c}({\mathcal{B}_t})$ are the gradient and average gradient of the sample and batch, respectively. A combination of these two criteria is used to select the top-$\kappa$ data among $\mathcal{B}_t$ for causal discovery during exploration. Experiments in Section \ref{sec:exp} show that this online method significantly accelerates causal discovery without sacrificing overall accuracy. 

It is noteworthy that our approach allows for flexibility in the choice of causal discovery methods. However,
as has been discussed in \cite{ding-generalizing-2022}, constraint-based approaches usually exhibit better robustness compared with score-based methods under our specific configuration. Furthermore, we have also conducted comparative experiments among different causal discovery methods to demonstrate the correctness of the causal structure identified by the time-lagged PC algorithm in Section \ref{sec:exp}.

\section{Boosting Efficiency through Causal Exploration}\label{sec:policy}
We now return to the fundamental question: how to enhance the data collection efficiency during causal exploration, thereby improving the performance of both causal discovery and model learning. To attain this goal, a commonly applied concept from active learning is the selection of samples that make the largest contributions to the model's training loss. These samples are typically considered as a subset that the model is least familiar with. Hence, the prediction loss is used here as the intrinsic reward to guide exploration with a scaling weight $\eta$:
\begin{equation}\label{equ:intrinsic}
  \begin{aligned}
    r^i_{t-1} = \frac{\eta}{2} \| \hat{\boldsymbol{s}}_{t} - \boldsymbol{s}_{t} \|^2_2.
  \end{aligned}
\end{equation}
This prediction loss can also be viewed as a validation of the agent's causal beliefs. The larger the prediction error, the more surprised the agent is by the actual outcome, implying a greater deviation from the estimated values based on the causal structure and the world model. The faster the error rate drops, the more learning progress signals we acquire.

However, not all the novel states have a positive impact on the model. On the contrary, some noisy data may contribute significantly to prediction errors but can lead the model to an awful direction, which necessitates the agent to also pay attention to the inherent quality of the data during exploration. To reflect this, we introduce active reward \cite{fang-learning-2017} that measures the data quality as another intrinsic motivation. Once a new sample is collected at time step $t$, active reward is then calculated as the change of the model's prediction ability before and after training. We use the prediction accuracy on a test set $\mathcal{D}_h$ generated from episodes unseen before training to reflect the world model's performance and formulate active reward as
\begin{equation}
  \begin{aligned}
    r^a_{t-1} = \overline{L_f(\boldsymbol{w}^c_{t-2})} - \overline{L_f(\boldsymbol{w}^c_{t-1})},
  \end{aligned}
\end{equation}
where $\overline{L_f(\cdot)}$ is the mean prediction error on $\mathcal{D}_h$ and $\boldsymbol{w}^c_t$ denotes parameters of the trained world model at time $t$. The value of active reward reflects beneficial or detrimental training caused by the selected data. If the reward is always positive, it indicates that the agent has been selecting beneficial samples for training the world model. We combine prediction loss and active reward with a regularization weight $\beta$:
\begin{equation}\label{equ:active}
    \begin{aligned}
        r_{t-1} = r^i_{t-1} + \beta r^a_{t-1}.
    \end{aligned}
\end{equation}
During causal exploration, the agent keeps searching for causal informative data by maximizing the expected rewards, which is
\begin{equation}\label{eq:policy}
    \boldsymbol{a}^*_t = \underset{\boldsymbol{a} \in \mathcal{A}}{\arg\max} ~~ \mathbb{E}_{\tau \sim \pi} \left[\sum_t \gamma^t r_t \right],
\end{equation}
where $\tau$ represents the trajectory generated by the exploration policy $\pi$ and $\gamma$ is the discount factor. Meanwhile, the world model minimizes the prediction loss. Since both of them contain the prediction error in equation (\ref{equ:pred_loss}) and (\ref{equ:intrinsic}), we can draw a conclusion that the learning of world models and causal exploration facilitate each other. Figure \ref{fig:causalnet} shows an overview of causal exploration, details are shown in Algorithm \ref{alg:framework}.
\begin{algorithm}[htb]
  \caption{Task-agnostic Causal Exploration}
  \label{alg:framework}
  \begin{algorithmic}[1]
  \STATE \textbf{Initialize}: ~Forward world Model $f$ with parameter $\boldsymbol{w}^c$ \\ \qquad \qquad ~ Causal discovery period $N$, Dataset $\mathcal{B}_t$\\ \qquad \qquad ~ Exploration policy $\pi$ with memory buffer $\mathcal{M}$
  \STATE \textbf{Output}: Forward world model $f$
  \FOR{Episode = $1,2,\ldots$}
  \STATE Collect test set $\mathcal{D}_h$ for the calculation of active reward
  \STATE Set current time $t=1$
  \WHILE{$t<T$}
  \STATE Choose action $\boldsymbol{a}_{t-1}$ and predict next state $\hat{\boldsymbol{s}}_{t}$
  \STATE Obtain $\boldsymbol{s}_{t}$ from environment
  \STATE Calculate intrinsic reward $r_{t-1}$
  \STATE Store $(\boldsymbol{s}_{t-1}, \boldsymbol{a}_{t-1}, \boldsymbol{s}_{t})$ into $\mathcal{B}_t$
  \STATE Store $(\boldsymbol{s}_{t-1}, \boldsymbol{a}_{t-1}, r_{t-1}, \boldsymbol{s}_{t})$ into $\mathcal{M}$
  \IF{$t=0 \pmod N$}
  \STATE Do online causal discovery on top-$\kappa$ of $\mathcal{B}_t$ and get causal graph $\mathcal{G}$
  \ENDIF
  \STATE Train world model under $\mathcal{G}$ on $\mathcal{B}_t$ and update $\boldsymbol{w}^c$
  \STATE Train exploration policy on $\mathcal{M}$ and update $\pi$
  \ENDWHILE
  \ENDFOR
  \RETURN Latest forward world model $f$
  \end{algorithmic}
\end{algorithm}

\subsection{Theoretical Analysis on Causal Exploration}
In this subsection, we first present a mathematical criterion for evaluating the impact of causal exploration on sampling efficiency, and then we theoretically demonstrate the benefits of causal exploration in learning world models, especially when the causal graph is sparse. 

\paragraph{Causal efficiency ratio.}
Let $k$ denote the optimization step in world model training, $f$ represent the world model, and $L_f(\boldsymbol{w})$ be the corresponding loss function. The model parameters incorporating causal knowledge after the $k$-th optimization are denoted as $\boldsymbol{w}^c(k)$, while $\boldsymbol{w}(k)$ represents the parameters without causal information \footnote{For each exploration time $t$, the model undergoes several optimization steps, influenced by the data volume and batch size.}. We use the convergence bounds ratio, denoted by $\xi$, calculated as the relative improvement in the model's loss compared to the optimal value $L_f^\star$, to assess the enhanced exploration efficiency resulting from the incorporation of causal knowledge:
\begin{equation}
    \xi = \frac{L_f(\boldsymbol{w}^c(k)) - L_f^\star}{L_f(\boldsymbol{w}(k)) - L_f^\star}.
\end{equation}
Note that this ratio measures how much closer the model with causal knowledge gets to the optimal value compared to those without causal information. If $\xi < 1$ consistently holds, it indicates an improvement in efficiency attributable to causal exploration. Next, we provide a theoretical analysis of the performance of causal exploration in linear cases, and empirical experiments are further conducted to show that causal exploration is still efficient in learning deep neural networks. 

\paragraph{Theoretical guarantees.}
We consider learning a linear forward model with gradient descent, formulated as
\begin{equation}
  \begin{aligned}
    \min_{\boldsymbol{w}} \quad L_f(\boldsymbol{w}) = \|\boldsymbol{w}^\top \cdot (\boldsymbol{s}_{t-1}, \boldsymbol{a}_{t-1}) - \boldsymbol{s}_t \|^2_2,
  \end{aligned}
\end{equation}
where $\boldsymbol{s}_t={\boldsymbol{w}^\star}^\top \cdot (\boldsymbol{s}_{t-1}, \boldsymbol{a}_{t-1})$ is the ground truth value. Moreover, we make the following assumption:
\begin{assumption}\label{ass:convex}
  $L_f$ is strong convex and smooth such that $\exists ~m>0,~M>0$, for any $\boldsymbol{w} \in$ $\rm dom$ $L_f$, we have
  \begin{equation*}
      MI \succeq \nabla^2 L_f(\boldsymbol{w}) \succeq mI.
  \end{equation*}
\end{assumption}
The following theorem shows reduced error bound with causal exploration.
\begin{thm}\label{thm}
  Suppose Assumption \ref{ass:convex} holds, and suppose the density of causal matrix $D$ is $\delta$ and the model is initialized with $\boldsymbol{w}_0$. Then for every optimization step $k$, we have
  \begin{equation}\label{equ:ratio}
    \begin{aligned}
      L_f(\boldsymbol{w}^c(k)) - L_f^\star &\le \delta^k \left( L_f(\boldsymbol{w}(k)) - L_f^\star\right) \\
      &\le \frac M2 \left[ \delta  \left( 1- \frac m M \right)\right]^k \| \boldsymbol{w}_{0} - \boldsymbol{w}^\star \|^2_2.
    \end{aligned}
  \end{equation}
\end{thm}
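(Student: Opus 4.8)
The plan is to establish the displayed chain from right to left, as two largely independent estimates composed at the end.

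\textbf{The rightmost inequality} is the standard linear rate of gradient descent with step size $1/M$ on an $m$-strongly convex, $M$-smooth objective. I would get it in two moves: $M$-smoothness gives the descent lemma $L_f(\boldsymbol{w}(k+1)) \le L_f(\boldsymbol{w}(k)) - \frac{1}{2M}\|\nabla L_f(\boldsymbol{w}(k))\|^2_2$, and $m$-strong convexity gives the gradient-domination inequality $\|\nabla L_f(\boldsymbol{w})\|^2_2 \ge 2m\,(L_f(\boldsymbol{w}) - L_f^\star)$. Combining them yields the one-step contraction $L_f(\boldsymbol{w}(k+1)) - L_f^\star \le (1 - \frac{m}{M})\,(L_f(\boldsymbol{w}(k)) - L_f^\star)$; unrolling $k$ times and bounding the initial gap by $L_f(\boldsymbol{w}_0) - L_f^\star \le \frac{M}{2}\|\boldsymbol{w}_0 - \boldsymbol{w}^\star\|^2_2$ (again from $M$-smoothness, using $\nabla L_f(\boldsymbol{w}^\star)=0$) closes it. In the quadratic model here, $\nabla^2 L_f$ is a fixed multiple of the (empirical or population) second-moment matrix of $(\boldsymbol{s}_{t-1},\boldsymbol{a}_{t-1})$ and $m,M$ are its extreme eigenvalues, so Assumption \ref{ass:convex} is exactly what makes this run.

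\textbf{The left inequality} rests on the observation that masking the inputs of each $f^i$ by $D_i$ confines every iterate $\boldsymbol{w}^c(k)$ to the coordinate subspace $\mathcal{C}$ of weight matrices whose $i$-th column is supported on $\mathrm{supp}(D_i)$ --- a subspace whose dimension is a $\delta$ fraction of the ambient one. Since non-parents carry zero weight, $\boldsymbol{w}^\star \in \mathcal{C}$, so the minimum of $L_f$ over $\mathcal{C}$ is still $L_f^\star$ and masked gradient descent is precisely gradient descent on the restriction of $L_f$ to $\mathcal{C}$. Using the per-output decomposition $L_f(\boldsymbol{w}) = \sum_{i=1}^{n} L_f^i(\boldsymbol{w}_i)$, where the causal $L_f^i$ depends on only $\delta_i(n+c)$ of the $n+c$ inputs and $\delta = \frac{1}{n}\sum_i \delta_i$, I would show each masked step shrinks the gap by an extra factor $\delta$ relative to the corresponding full step, then iterate to get $\delta^k$.

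\textbf{The main obstacle} is exactly that last claim: upgrading ``the causal objective lives in a $\delta$-fraction subspace'' from ``converges no slower'' --- which follows from eigenvalue interlacing of the restricted Hessian and hence a no-worse condition number --- to the sharp factor $\delta^k$. I expect to close this either via an explicit hypothesis that the restricted curvature and smoothness constants scale with the retained fraction of coordinates (true, e.g., when the input second-moment matrix is block-isotropic or normalized per active coordinate), or via a counting argument over the $n$ output blocks that treats $L_f$ as additive in the parameters and assigns a $(1-\delta_i)$ share of each block's gap to directions the causal model has already set to zero. With that structural step in place, composition is immediate, since $\delta^k\cdot\frac{M}{2}(1-\frac{m}{M})^k = \frac{M}{2}[\delta(1-\frac{m}{M})]^k$.
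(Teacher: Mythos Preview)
Your right-hand inequality is fine; the paper gets there by tracking iterate distance instead of function value, proving $\|\boldsymbol{w}(k)-\boldsymbol{w}^\star\|^2 \le (1-m/M)^k\|\boldsymbol{w}_0-\boldsymbol{w}^\star\|^2$ from the one-step expansion of $\|\boldsymbol{w}(k)-\alpha\nabla L_f(\boldsymbol{w}(k))-\boldsymbol{w}^\star\|^2$ with $\alpha=1/M$, and only converting to function values at the end via $L_f(\boldsymbol{w})-L_f^\star\le\frac{M}{2}\|\boldsymbol{w}-\boldsymbol{w}^\star\|^2$. Your descent-lemma/PL route is equivalent.

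The left inequality is where you have a genuine gap, and the paper's route is quite different from both of your proposed fixes. You treat $\boldsymbol{w}^c(k)$ as an independent masked-GD trajectory and try to extract an extra factor $\delta$ per step from condition-number or coordinate-counting arguments; as you note, interlacing only gives ``no slower,'' and the block-isotropy hypothesis you float is not part of the statement. The paper sidesteps all of this by taking $\boldsymbol{w}^c(k)=D\odot\boldsymbol{w}(k)$, i.e.\ the causal iterate is the coordinate projection of the non-causal one. Since $\boldsymbol{w}^\star=D\odot\boldsymbol{w}^\star$, Pythagoras gives
\[
\|\boldsymbol{w}(k)-\boldsymbol{w}^\star\|^2=\|\boldsymbol{w}^c(k)-\boldsymbol{w}^\star\|^2+\|(1-D)\odot\boldsymbol{w}(k)\|^2,
\]
hence $\|\boldsymbol{w}^c(k)-\boldsymbol{w}^\star\|^2\le\delta_k\|\boldsymbol{w}(k)-\boldsymbol{w}^\star\|^2$ with $\delta_k:=1/(1+\rho_k)$ and $\rho_k$ the ratio of the two summands. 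The $\delta^k$ then comes from alternating this projection inequality with the one-step GD contraction; working in $\|\cdot\|^2$ rather than in $L_f$ is what makes the two compose cleanly.

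A second point you are missing: the $\delta$ in the theorem is \emph{not} literally the fraction of ones in $D$. The paper sets $\delta=\max\{\delta_0,\ldots,\delta_k\}$, an iterate-dependent quantity that is only heuristically tied to the graph density (``whose value is related to the sparsity of $D$''). Your plan tries to derive the bound with $\delta$ equal to the actual density, which is strictly stronger than what the paper proves and is why your argument stalls.
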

\paragraph{Remark.}
The first inequality in Equation (\ref{equ:ratio}) establishes an upper bound for $\xi$ at $ \delta^k $. Given that $0 \le \delta^k \le 1$, this confirms the effectiveness of causal exploration in enhancing efficiency. The subsequent inequality establishes that the training error of causal exploration gradually converges to the optimal value, exhibiting an \textit{exponential} convergence rate characterized by the decay of the factor $\left[ \delta  \left( 1- \frac m M \right)\right]^k$. Moreover, this theorem implies that the advantages of causal exploration are relevant to the sparseness of causal structure. The sparser the causal structure, the faster our method learns. When the causal matrix $D$ is a complete matrix ($\delta = 1$), causal exploration degenerates into non-causal prediction-based exploration. The proof for Theorem \ref{thm} is provided in Appendix \ref{ape:proof}.

\section{Experiments}\label{sec:exp}

\begin{table*}[tbp]
	\centering
        \resizebox{\linewidth}{!}{
	\begin{tabular}{lcccc}
		\toprule
		Approach &  Causal Knowledge & Explicit Structural Embedding & Online Causal discovery & Non-random Data Collection\\
		\midrule
		Curiosity \cite{pathak-curiosity-2017} &  \XSolidBrush & \XSolidBrush  & \XSolidBrush & \Checkmark  \\
		Plan2Explore \cite{Pathak-plan2explore-2019} & \XSolidBrush & \XSolidBrush  & \XSolidBrush & \Checkmark  \\
		CID \cite{seitzer-causal-2021} & \Checkmark & \XSolidBrush  & \XSolidBrush & \XSolidBrush  \\
            ASR \cite{huang-action-2022} & \Checkmark & \Checkmark  & \XSolidBrush & \XSolidBrush  \\
            CDL \cite{wang-causal-2022} & \Checkmark & \XSolidBrush  & \XSolidBrush & \Checkmark  \\
		\textbf{Causal Exploration (ours)} & \Checkmark & \Checkmark  & \Checkmark & \Checkmark   \\
		\bottomrule
	\end{tabular}}
        \caption{Attributes of existing baselines and our causal exploration. \Checkmark denotes that a method has an attribute, whereas \XSolidBrush denotes the opposite. "Causal Knowledge" here indicates whether causal information is utilized. Methods with "Explicit Structural Embedding" formulate dynamics models embedded the with causal matrix $D$. "Online Causal Discovery" and "Non-random Data Collection" measure the contribution of the method to boosting sampling and model learning efficiency from different perspectives, corresponding to Sections \ref{sec:causal} and \ref{sec:policy}, respectively.}
	\label{tab:attributes}
\end{table*}

To evaluate the effectiveness of causal exploration in complex scenarios, we conduct a series of experiments on both synthetic datasets and real-world applications, including the traffic light control task and MuJoCo control suites \cite{todorov-mujoco-2012}. Meanwhile, we compare our proposed causal exploration with the following baseline methods:
\begin{enumerate}
    \item \textbf{Curiosity} \cite{burda-large-2018} and \textbf{Plan2Explore} \cite{Pathak-plan2explore-2019}: non-causal exploration methods that learn dense models with different forms of intrinsic rewards.
    \item \textbf{CID} \cite{seitzer-causal-2021}: focusing on detecting causal influences between actions $\boldsymbol{a}_{t-1}$ and future states $\boldsymbol{s}_t$ with random data collection. Causal exploration extends to discover dependencies among state transitions as well.
    \item \textbf{ASR} \cite{huang-action-2022}: a differential approach that learns the causal matrix $D$ as free parameters together with the world model $f$. Instead, the identification of $D$ is completed by causal discovery methods in our approach. 
    \item \textbf{CDL} \cite{wang-causal-2022}: learning causal dynamics models to approximate conditional mutual information rather than directly implementing CIT on the observed data.
\end{enumerate}
Table \ref{tab:attributes} summarizes the attributes of these baselines and our proposed causal exploration from four perspectives. Moreover, following \cite{ke-systematic-2021}, we use the following metrics to evaluate the performance of the learned models:
\begin{enumerate}
    \item \textbf{Prediction Ability}: The error in predicting future states is a direct reflection of the  model learning quality. In other words, lower prediction errors achieved with the same number of samples during exploration imply a higher level of accuracy, signifying increased sampling efficiency.
    \item \textbf{Structural Difference}: Evaluation of causal discovery methods is based on the difference between the identified causal matrix and the ground truth matrix.
    \item \textbf{Downstream Performance}: For task-agnostic exploration, performance on downstream tasks also serves as a good metric to measure the model’s understanding of the underlying causal structures of the environment.
\end{enumerate}

\subsection{Synthetic Datasets}\label{sec:synthetic}
\paragraph{Environments.}
We build our simulated environment following the state space model with controls. When the agent takes an action $\boldsymbol{a}_t$ based on the current state, the environment provides feedback $\boldsymbol{s}_{t+1}$ at the next time. We denote the generative environment as
\begin{equation}
  \begin{aligned}
    \boldsymbol{s}_1 \sim \mathcal{N}(\boldsymbol{0}, I), \quad
    \boldsymbol{s}_t \sim \mathcal{N} (h(\boldsymbol{s}_{t-1},\boldsymbol{a}_{t-1}), \Sigma),
  \end{aligned}
\end{equation}
where $\Sigma$ is the covariance matrix and $h$ is the mean value as the ground truth transition function implemented by deep neural networks under causal graph $\mathcal{G}$. Specifically, the linear condition consists of a single-layer network, and the nonlinear function is three-layer MLPs with sigmoid activation.

Given that the sparsity of the causal graph is an important factor affecting the performance of our method, we choose to demonstrate the superiority of our approach on relatively low-density causal structures. That is, $\mathcal{G}$ is generated by randomly connecting edges with a probability of $p$. Such sparse causal structures allow us to evaluate the ability of our method to accurately learn world models with limited data, which is a common scenario in many real-world applications. Below, we provide discussions on the benefits of causal exploration based on the empirical results.
\begin{figure}[tbp]
    \centering
    \includegraphics[width=\linewidth]{./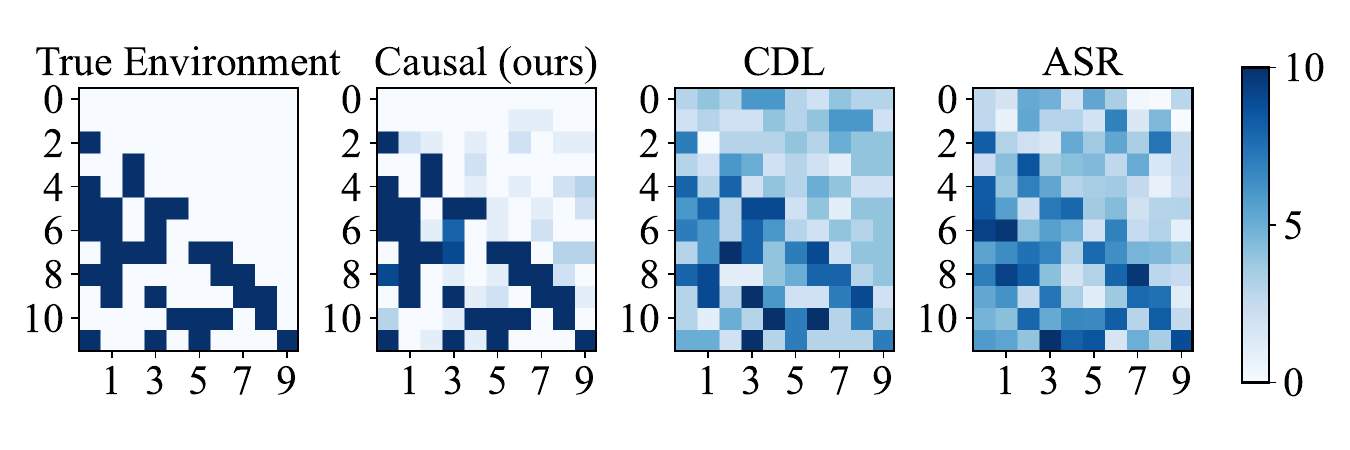}
    \caption{A comparison of the discovered causal matrix with different methods when $|\mathcal{U}| = 12$ and $|\mathcal{V}|=10$.}
    \label{fig:com_causal}
\end{figure}
\paragraph{Correctness of the identified causal structure.} We compare our proposed method with different causal discovery methods. To be specific, CID and CDL apply score-based causal discovery, while ASR combines causal structure with neural networks for differentiable training. However, CID only considers causal action influence and lacks an explicit causal structure, so we do not use it for comparison. Figure \ref{fig:com_causal} illustrates an example of the causal matrix discovered by different methods, where the depth of the color indicates the number of times each edge is detected as a causal connection. Figure \ref{fig:graph_score} further shows the averaged F1-score of these methods. The results reflect that the causal matrix learned through causal exploration obtains a smaller bias from the ground-truth matrix than others, an explanation for better performances.

\paragraph{Reliability of the learned world models.}
To show the advantage of causal exploration for improving the quality of model learning, we compare it with all of the baselines. Figure \ref{fig:synthetic_result} are prediction errors of the world models during exploration under various state and action spaces. For each of these settings, we conduct $10$ experiments and take the average value to reduce the impact of randomness. In all of these environments, causal exploration achieves lower prediction errors with fewer data. Compared to Curiosity and Plan2Explore, these advantages stem from the introduction of causality. In addition, the larger the dimension and the sparser the causal structure, the more prominent the advantages of our proposed method over non-causal method will be. The improvements over ASR and CID can be attributed to the effective design of exploration strategies. Furthermore, the experimental results compared with CDL underscore the importance of the stability of causal discovery methods and the necessity of explicit causal embeddings. 
\begin{figure}[tbp]
  \centering
  \includegraphics[width=\linewidth]{./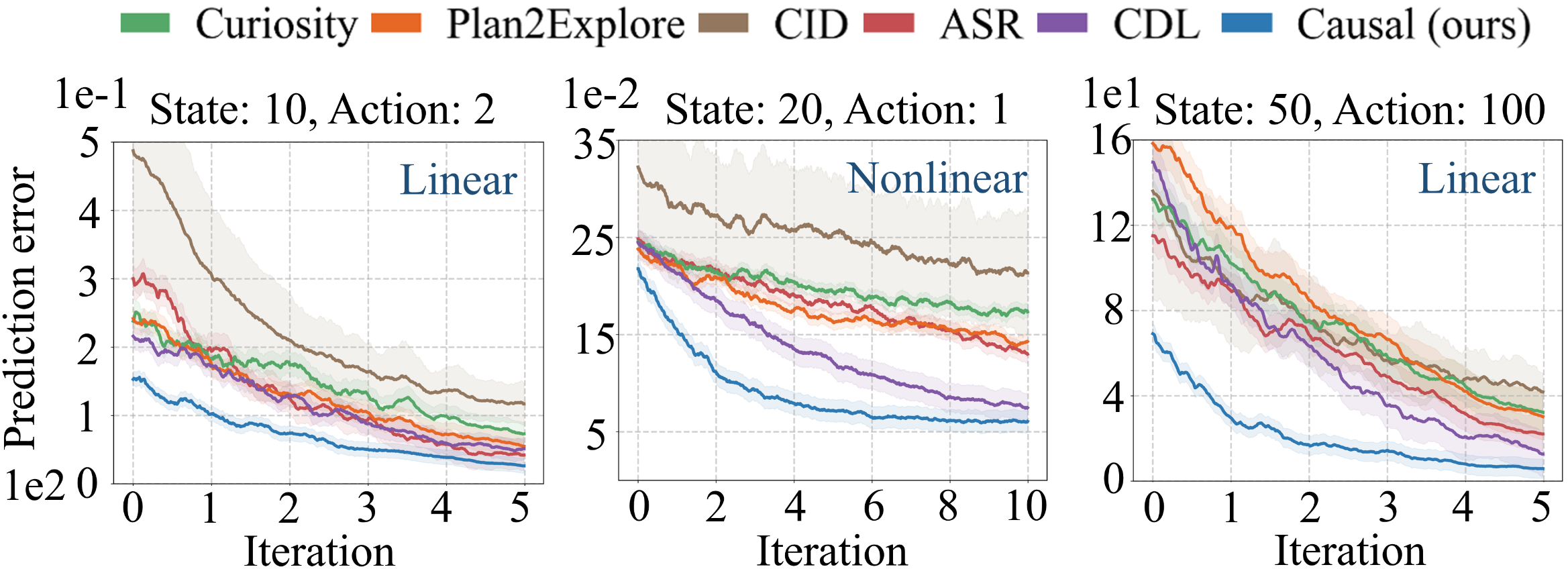}
  \caption{Prediction ability of different learned models on the synthetic environment under various state and action spaces.}
  \label{fig:synthetic_result}
\end{figure}

\paragraph{Advantage of causal-based models at the initial iteration.}
We also observe that many of the prediction error curves in Figure \ref{fig:synthetic_result} have a different $y$ intercept, suggesting that causal exploration is better even at the initial iteration. This is because the integration of causal matrix $D$ into the model exclusively incorporates the parent nodes and eliminates extraneous input information when predicting future states, a departure from prior methodologies where these nodes were treated and initialized on par with parent nodes. We note that this advantage is also tied to the correctness of the identified causal matrix and the sparsity of the causal graph as in Theorem \ref{thm}. 

\paragraph{Ablation study and generalization to some challenging scenarios.}
We further perform ablation studies: For online causal discovery, we provide results with various sampling methods; for the optimization of exploration strategy, we first investigate the impact of active reward and then perform causal exploration with some other forms of intrinsic reward. Besides, since the reliability of identified causal structures is an essential guarantee for the performance of causal exploration, we also consider challenging scenarios where the agent starts with wrong causal beliefs and encounters sudden structural changes. All of these corresponding experimental results, along with the implementation details, are given from Appendix \ref{ape:abl_stu} to Appendix \ref{ape:mujoco}.

\subsection{Real World Applications}
\paragraph{Application to Traffic Signal Control Task.}
We further conduct experiments in the challenging traffic signal control task to evaluate our proposed method. Table \ref{tab:performance} lists the performance of different learned world models on downstream policy learning. The reward here is a combination of several terms including the sum of queue length and so on, defined in Equation $(3)$ in IntelliLight (\cite{wei-intellilight-2018}). Duration refers to averaged travel time vehicles spent on approaching lanes (in seconds). Length is the sum of the length of waiting vehicles over all approaching lanes and Vehicles is the total number of vehicles that passed the intersection. We also train an agent in the true environment to verify the reliability of the learned models through task-agnostic exploration. The performance of the policy learned on the world models trained during causal exploration is comparable to, or even in some metrics better than, that of the true environment. Appendix \ref{ape:traffic} provides the detailed descriptions, including the causal graph on state variables in traffic signal control task, and comparison of the prediction errors during exploration.
\begin{table}[tbp]
  \centering
  \resizebox{\linewidth}{!}{
  \begin{tabular}{lcccc}
  \hline
   Model &  Reward &  Duration &  Length &  Vehicles \\
  \hline
   Curiosity &  -3.51 &  10.92 &  1.55 &  304 \\
   CID &  -4.36 &  12.45 &  3.62 &  288 \\
   ASR &   -1.69 &  8.23 &  0.92 &  312 \\
   CDL &  -2.55 &  9.39 &  1.32 &  308 \\
   \textbf{Ours} &  \textbf{-1.37} &  \textbf{7.21} &  \textbf{0.31} &  \textbf{314} \\
  \hline
  \rowcolor{gray!25} \textit{True Environment} &  \textit{-1.01} &  \textit{7.61} &  \textit{0.11} &  \textit{316} \\
  \hline
  \end{tabular}}
  \caption{Performances of downstream policy learning in traffic signal control task.}
  \label{tab:performance}
\end{table}

\paragraph{Application to MuJoCo Tasks.}
\begin{figure}[tbp]
  \centering
	\includegraphics[width=.95\linewidth]{./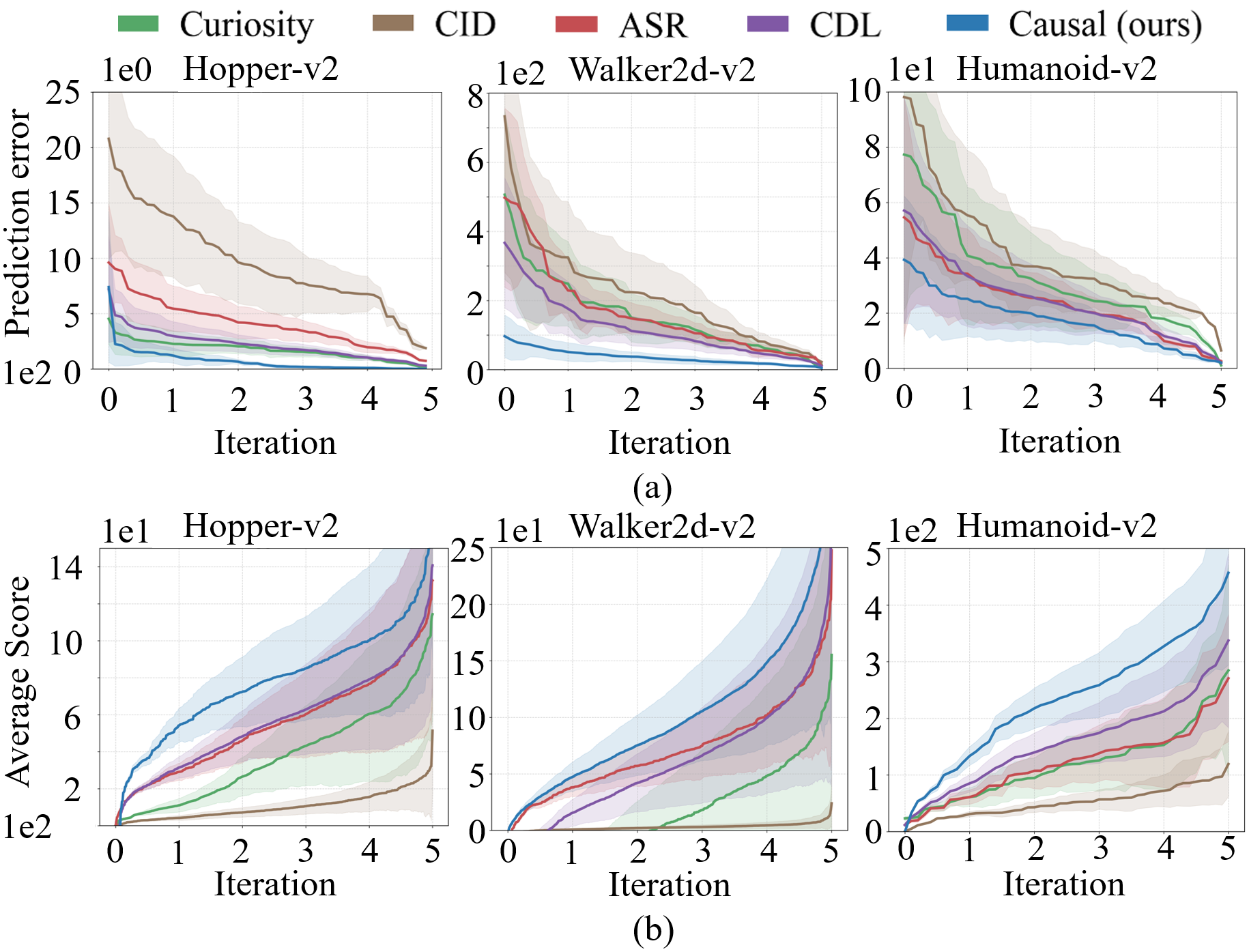}
  \caption{(a) Prediction errors in world model learning and (b) downstream policy learning performance on MuJoCo tasks.}
  \label{fig:MuJoCo}
\end{figure}
We also evaluate causal exploration on the challenging MuJoCo tasks, where the state-action dimensions range from tens (Hopper-v2) to hundreds (Humanoid-v2). Implementation details and more experimental results including the identified causal structures are given in Appendix \ref{ape:mujoco}. As shown in Figure \ref{fig:MuJoCo}(a), the world model learned during causal exploration keeps performing lower prediction errors than others. This ability of our causal method stems from its focus on causal structural constraints. By selectively incorporating parent nodes during state prediction, our approach maintains informed exploration even when exploratory signals decline, which prevents the agent from getting stuck in sub-optimal behaviors. Such a good performance is also achieved in a shorter time. Task performance in Figure \ref{fig:MuJoCo}(b) also reflects the reliability of these world models. We see that our proposed method still works well when the state space is large. As long as there is a strong causal relationship between the observed state variables, and the causal discovery algorithm can accurately identify the corresponding causal structure, our causal exploration method is believed to be well applied to high-dimensional situations.

\section{Conclusion and Future Work}
In this paper, we introduce causal exploration, a methodology designed to incorporate causal information from data for the purpose of learning world models efficiently. In particular, we employ causal exploration within the domain of task-agnostic reinforcement learning and design a sharing-decomposition schema to leverage causal structural knowledge for the world model. A series of experiments in both simulated environments and real-world tasks demonstrate the superiority of causal exploration, which highlights the importance of rich causal prior knowledge for efficient data collection and model learning. We would like to point out that this study primarily focuses on scenarios where states are fully observed. Future research directions involve addressing more complex scenarios including unobserved latent state variables and developing improved fault-tolerant mechanisms to enhance robustness.  This work provides a promising direction for future endeavors in efficient exploration.

\section*{Acknowledgments}
Yupei Yang, Shikui Tu and Lei Xu would like to acknowledge the support by the Shanghai Municipal Science and Technology Major Project, China (Grant No. 2021SHZDZX0102), and by the National Natural Science Foundation of China (62172273). 

\bibliographystyle{named}
\bibliography{ijcai24}

\begin{thebibliography}{}

\bibitem[\protect\citeauthoryear{Aubret \bgroup \em et al.\egroup }{2019}]{aubret-survey-2019}
Arthur Aubret, Laetitia Matignon, and Salima Hassas.
\newblock A survey on intrinsic motivation in reinforcement learning.
\newblock {\em arXiv preprint arXiv:1908.06976}, 2019.

\bibitem[\protect\citeauthoryear{Bellemare \bgroup \em et al.\egroup }{2016}]{bellemare-unifying-2016}
Marc Bellemare, Sriram Srinivasan, Georg Ostrovski, Tom Schaul, David Saxton, and Remi Munos.
\newblock Unifying count-based exploration and intrinsic motivation.
\newblock {\em Advances in neural information processing systems}, 29, 2016.

\bibitem[\protect\citeauthoryear{Burda \bgroup \em et al.\egroup }{2018}]{burda-large-2018}
Yuri Burda, Harri Edwards, Deepak Pathak, Amos Storkey, Trevor Darrell, and Alexei~A Efros.
\newblock Large-scale study of curiosity-driven learning.
\newblock {\em arXiv preprint arXiv:1808.04355}, 2018.

\bibitem[\protect\citeauthoryear{Ding \bgroup \em et al.\egroup }{2022}]{ding-generalizing-2022}
Wenhao Ding, Haohong Lin, Bo~Li, and Ding Zhao.
\newblock Generalizing goal-conditioned reinforcement learning with variational causal reasoning.
\newblock {\em Advances in Neural Information Processing Systems}, 35:26532--26548, 2022.

\bibitem[\protect\citeauthoryear{Duan \bgroup \em et al.\egroup }{2016}]{duan-benchmarking-2016}
Yan Duan, Xi~Chen, Rein Houthooft, John Schulman, and Pieter Abbeel.
\newblock Benchmarking deep reinforcement learning for continuous control.
\newblock In {\em International conference on machine learning}, pages 1329--1338. PMLR, 2016.

\bibitem[\protect\citeauthoryear{Fang \bgroup \em et al.\egroup }{2017}]{fang-learning-2017}
Meng Fang, Yuan Li, and Trevor Cohn.
\newblock Learning how to active learn: A deep reinforcement learning approach.
\newblock {\em arXiv preprint arXiv:1708.02383}, 2017.

\bibitem[\protect\citeauthoryear{Hu \bgroup \em et al.\egroup }{2022}]{peng-causality-2022}
Xing Hu, Rui Zhang, Ke~Tang, Jiaming Guo, Qi~Yi, Ruizhi Chen, Zidong Du, Ling Li, Qi~Guo, Yunji Chen, et~al.
\newblock Causality-driven hierarchical structure discovery for reinforcement learning.
\newblock {\em Advances in Neural Information Processing Systems}, 35:20064--20076, 2022.

\bibitem[\protect\citeauthoryear{Huang \bgroup \em et al.\egroup }{2021}]{huang-adarl-2021}
Biwei Huang, Fan Feng, Chaochao Lu, Sara Magliacane, and Kun Zhang.
\newblock Adarl: What, where, and how to adapt in transfer reinforcement learning.
\newblock {\em arXiv preprint arXiv:2107.02729}, 2021.

\bibitem[\protect\citeauthoryear{Huang \bgroup \em et al.\egroup }{2022}]{huang-action-2022}
Biwei Huang, Chaochao Lu, Liu Leqi, Jos{\'e}~Miguel Hern{\'a}ndez-Lobato, Clark Glymour, Bernhard Sch{\"o}lkopf, and Kun Zhang.
\newblock Action-sufficient state representation learning for control with structural constraints.
\newblock In {\em International Conference on Machine Learning}, pages 9260--9279. PMLR, 2022.

\bibitem[\protect\citeauthoryear{Ke \bgroup \em et al.\egroup }{2021}]{ke-systematic-2021}
Nan~Rosemary Ke, Aniket Didolkar, Sarthak Mittal, Anirudh Goyal, Guillaume Lajoie, Stefan Bauer, Danilo Rezende, Yoshua Bengio, Michael Mozer, and Christopher Pal.
\newblock Systematic evaluation of causal discovery in visual model based reinforcement learning.
\newblock {\em arXiv preprint arXiv:2107.00848}, 2021.

\bibitem[\protect\citeauthoryear{Kim \bgroup \em et al.\egroup }{2018}]{kim-emi-2018}
Hyoungseok Kim, Jaekyeom Kim, Yeonwoo Jeong, Sergey Levine, and Hyun~Oh Song.
\newblock Emi: Exploration with mutual information.
\newblock {\em arXiv preprint arXiv:1810.01176}, 2018.

\bibitem[\protect\citeauthoryear{Kosoy \bgroup \em et al.\egroup }{2022}]{kosoy-learning-2022}
Eliza Kosoy, Adrian Liu, Jasmine~L Collins, David Chan, Jessica~B Hamrick, Nan~Rosemary Ke, Sandy Huang, Bryanna Kaufmann, John Canny, and Alison Gopnik.
\newblock Learning causal overhypotheses through exploration in children and computational models.
\newblock In {\em Conference on Causal Learning and Reasoning}, pages 390--406. PMLR, 2022.

\bibitem[\protect\citeauthoryear{Machado \bgroup \em et al.\egroup }{2020}]{machado-count-2020}
Marlos~C Machado, Marc~G Bellemare, and Michael Bowling.
\newblock Count-based exploration with the successor representation.
\newblock In {\em Proceedings of the AAAI Conference on Artificial Intelligence}, volume~34, pages 5125--5133, 2020.

\bibitem[\protect\citeauthoryear{Mutti \bgroup \em et al.\egroup }{2023}]{mutti-2023-provably}
Mirco Mutti, Riccardo De~Santi, Emanuele Rossi, Juan~Felipe Calderon, Michael Bronstein, and Marcello Restelli.
\newblock Provably efficient causal model-based reinforcement learning for systematic generalization.
\newblock In {\em Proceedings of the AAAI Conference on Artificial Intelligence}, volume~37, pages 9251--9259, 2023.

\bibitem[\protect\citeauthoryear{Nguyen \bgroup \em et al.\egroup }{2017}]{nguyen-variational-2017}
Cuong~V Nguyen, Yingzhen Li, Thang~D Bui, and Richard~E Turner.
\newblock Variational continual learning.
\newblock {\em arXiv preprint arXiv:1710.10628}, 2017.

\bibitem[\protect\citeauthoryear{Pathak \bgroup \em et al.\egroup }{2017}]{pathak-curiosity-2017}
Deepak Pathak, Pulkit Agrawal, Alexei~A Efros, and Trevor Darrell.
\newblock Curiosity-driven exploration by self-supervised prediction.
\newblock In {\em International conference on machine learning}, pages 2778--2787. PMLR, 2017.

\bibitem[\protect\citeauthoryear{Pathak \bgroup \em et al.\egroup }{2019}]{pathak-self-2019}
Deepak Pathak, Dhiraj Gandhi, and Abhinav Gupta.
\newblock Self-supervised exploration via disagreement.
\newblock In {\em International conference on machine learning}, pages 5062--5071. PMLR, 2019.

\bibitem[\protect\citeauthoryear{Rahaf and Lucas}{2019}]{Aljundi-Online-2019}
Aljundi Rahaf and Caccia Lucas.
\newblock Online continual learning with maximally interfered retrieval.
\newblock In {\em NIPS}, 2019.

\bibitem[\protect\citeauthoryear{Robine \bgroup \em et al.\egroup }{2023}]{robine-transformer-2023}
Jan Robine, Marc H{\"o}ftmann, Tobias Uelwer, and Stefan Harmeling.
\newblock Transformer-based world models are happy with 100k interactions.
\newblock {\em arXiv preprint arXiv:2303.07109}, 2023.

\bibitem[\protect\citeauthoryear{Schulman \bgroup \em et al.\egroup }{2017}]{schulman-proximal-2017}
John Schulman, Filip Wolski, Prafulla Dhariwal, Alec Radford, and Oleg Klimov.
\newblock Proximal policy optimization algorithms.
\newblock {\em arXiv preprint arXiv:1707.06347}, 2017.

\bibitem[\protect\citeauthoryear{Seitzer \bgroup \em et al.\egroup }{2021}]{seitzer-causal-2021}
Maximilian Seitzer, Bernhard Sch{\"o}lkopf, and Georg Martius.
\newblock Causal influence detection for improving efficiency in reinforcement learning.
\newblock {\em Advances in Neural Information Processing Systems}, 34:22905--22918, 2021.

\bibitem[\protect\citeauthoryear{Sekar \bgroup \em et al.\egroup }{2020}]{Pathak-plan2explore-2019}
Ramanan Sekar, Oleh Rybkin, Kostas Daniilidis, Pieter Abbeel, Danijar Hafner, and Deepak Pathak.
\newblock Planning to explore via self-supervised world models.
\newblock In {\em International Conference on Machine Learning}, pages 8583--8592. PMLR, 2020.

\bibitem[\protect\citeauthoryear{Shyam \bgroup \em et al.\egroup }{2019}]{shyam-model-2019}
Pranav Shyam, Wojciech Ja{\'s}kowski, and Faustino Gomez.
\newblock Model-based active exploration.
\newblock In {\em International conference on machine learning}, pages 5779--5788. PMLR, 2019.

\bibitem[\protect\citeauthoryear{Silver \bgroup \em et al.\egroup }{2016}]{silver-Go-2016}
David Silver, Aja Huang, Chris~J Maddison, Arthur Guez, Laurent Sifre, George Van Den~Driessche, Julian Schrittwieser, Ioannis Antonoglou, Veda Panneershelvam, Marc Lanctot, et~al.
\newblock Mastering the game of go with deep neural networks and tree search.
\newblock {\em nature}, 529(7587):484--489, 2016.

\bibitem[\protect\citeauthoryear{Spirtes \bgroup \em et al.\egroup }{2000}]{spirtes-causation-2000}
Peter Spirtes, Clark~N Glymour, Richard Scheines, and David Heckerman.
\newblock {\em Causation, prediction, and search}.
\newblock MIT press, 2000.

\bibitem[\protect\citeauthoryear{Tassa \bgroup \em et al.\egroup }{2018}]{tassa-deepmind-2018}
Yuval Tassa, Yotam Doron, Alistair Muldal, Tom Erez, Yazhe Li, Diego de~Las Casas, David Budden, Abbas Abdolmaleki, Josh Merel, Andrew Lefrancq, et~al.
\newblock Deepmind control suite.
\newblock {\em arXiv preprint arXiv:1801.00690}, 2018.

\bibitem[\protect\citeauthoryear{Todorov \bgroup \em et al.\egroup }{2012}]{todorov-mujoco-2012}
Emanuel Todorov, Tom Erez, and Yuval Tassa.
\newblock Mujoco: A physics engine for model-based control.
\newblock In {\em 2012 IEEE/RSJ International Conference on Intelligent Robots and Systems}, pages 5026--5033. IEEE, 2012.

\bibitem[\protect\citeauthoryear{Van~Hasselt \bgroup \em et al.\egroup }{2016}]{van-deep-2016}
Hado Van~Hasselt, Arthur Guez, and David Silver.
\newblock Deep reinforcement learning with double q-learning.
\newblock In {\em Proceedings of the AAAI conference on artificial intelligence}, volume~30, 2016.

\bibitem[\protect\citeauthoryear{Wang \bgroup \em et al.\egroup }{2022}]{wang-causal-2022}
Zizhao Wang, Xuesu Xiao, Zifan Xu, Yuke Zhu, and Peter Stone.
\newblock Causal dynamics learning for task-independent state abstraction.
\newblock {\em arXiv preprint arXiv:2206.13452}, 2022.

\bibitem[\protect\citeauthoryear{Wei \bgroup \em et al.\egroup }{2018}]{wei-intellilight-2018}
Hua Wei, Guanjie Zheng, Huaxiu Yao, and Zhenhui Li.
\newblock Intellilight: A reinforcement learning approach for intelligent traffic light control.
\newblock In {\em Proceedings of the 24th ACM SIGKDD International Conference on Knowledge Discovery \& Data Mining}, pages 2496--2505, 2018.

\bibitem[\protect\citeauthoryear{Yoo and Kweon}{2019}]{yoo-learning-2019}
Donggeun Yoo and In~So Kweon.
\newblock Learning loss for active learning.
\newblock In {\em Proceedings of the IEEE/CVF conference on computer vision and pattern recognition}, pages 93--102, 2019.

\bibitem[\protect\citeauthoryear{Yoon \bgroup \em et al.\egroup }{2021}]{yoon-online-2021}
Jaehong Yoon, Divyam Madaan, Eunho Yang, and Sung~Ju Hwang.
\newblock Online coreset selection for rehearsal-based continual learning.
\newblock {\em arXiv preprint arXiv:2106.01085}, 2021.

\bibitem[\protect\citeauthoryear{Yu \bgroup \em et al.\egroup }{2023}]{yu-explainable-2023}
Zhongwei Yu, Jingqing Ruan, and Dengpeng Xing.
\newblock Explainable reinforcement learning via a causal world model.
\newblock In {\em Proceedings of the Thirty-Second International Joint Conference on Artificial Intelligence}, pages 4540--4548. IJCAI, 2023.

\bibitem[\protect\citeauthoryear{Zhang \bgroup \em et al.\egroup }{2012}]{zhang-kernel-2012}
Kun Zhang, Jonas Peters, Dominik Janzing, and Bernhard Sch{\"o}lkopf.
\newblock Kernel-based conditional independence test and application in causal discovery.
\newblock {\em arXiv preprint arXiv:1202.3775}, 2012.

\bibitem[\protect\citeauthoryear{Zhu \bgroup \em et al.\egroup }{2019}]{zhu-causalrl-2019}
Shengyu Zhu, Ignavier Ng, and Zhitang Chen.
\newblock Causal discovery with reinforcement learning.
\newblock {\em arXiv preprint arXiv:1906.04477}, 2019.

\bibitem[\protect\citeauthoryear{Zhu \bgroup \em et al.\egroup }{2022}]{zhu-2022-offline}
Zheng-Mao Zhu, Xiong-Hui Chen, Hong-Long Tian, Kun Zhang, and Yang Yu.
\newblock Offline reinforcement learning with causal structured world models.
\newblock {\em arXiv preprint arXiv:2206.01474}, 2022.

\end{thebibliography}

\clearpage
\appendix
In the following, we first present the proof of Theorem \ref{thm}. After that, more experimental details about the synthetic experiments and real-world applications are given.

\section{Proof of Theorem \ref{thm}}\label{ape:proof}

Recall that we consider solving a linear regression problem under $m$ strong convex and $M$-smooth. MSE loss function $L_f(\boldsymbol{w})$ in the formulation $\| \boldsymbol{w}^\top \cdot (\boldsymbol{s}_{t-1}, \boldsymbol{a}_{t-1}) - \boldsymbol{s}_t \|^2_2 $ with $\boldsymbol{s}_t={\boldsymbol{w}^\star}^\top \cdot (\boldsymbol{s}_{t-1}, \boldsymbol{a}_{t-1})$ is minimized by gradient descent method with 
\begin{equation}\label{equ:gd}
  \boldsymbol{w}(k) = \boldsymbol{w}(k-1) - \alpha \nabla L_f(\boldsymbol{w}(k-1))
\end{equation}
where $\alpha$ is the corresponding step size.

Denote $\boldsymbol{w}^c(k)$ as the network parameters taking causal constraints with respect to $\boldsymbol{w}(k)$ that does not gather causal information. Theorem \ref{thm} shows that causal exploration gets a prediction error bound $\delta^k$ times lower at the $k$-th step, where $\delta$ is a density measurement of the causal adjacency matrix $D$.

Below, we present a two-step proof for the convergence of causal exploration. Lemma \ref{lem:convergence} provides an upper bound for convergence without using any causal information. Lemma \ref{lem:bound} demonstrates that utilizing causal structure information results in $\boldsymbol{w}^c(k)$ being closer to the optimal value $\boldsymbol{w}^\star$ compared to $\boldsymbol{w}(k)$ at the same optimization steps. Combining Lemma \ref{lem:convergence} and Lemma \ref{lem:bound}, we derive the convergence rate for causal exploration.

\begin{lem}\label{lem:convergence}
  Suppose $L_f(\boldsymbol{w})$ is $m$-strongly convex and $M$-smooth. We have 
  \begin{equation}
    \| \boldsymbol{w}(k) - \boldsymbol{w}^\star \|^2 \le (1 - \frac mM )^k \|\boldsymbol{w}_0 - \boldsymbol{w}^\star \|^2.
  \end{equation}
\end{lem}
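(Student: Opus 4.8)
The plan is to prove Lemma~\ref{lem:convergence} by the standard analysis of gradient descent on a strongly convex, smooth objective, specialized to a constant step size. First I would fix the step size, the natural choice being $\alpha = 1/M$ (or more generally any $\alpha \in (0, 2/M]$; $\alpha=1/M$ gives the cleanest constant), and recall the defining recursion \eqref{equ:gd}, $\boldsymbol{w}(k) = \boldsymbol{w}(k-1) - \alpha \nabla L_f(\boldsymbol{w}(k-1))$. Since $L_f$ is a quadratic here (a linear-regression MSE objective with Hessian $H := \nabla^2 L_f$ constant and satisfying $mI \preceq H \preceq MI$ by Assumption~\ref{ass:convex}), the minimizer $\boldsymbol{w}^\star$ satisfies $\nabla L_f(\boldsymbol{w}^\star)=0$, and one can write $\nabla L_f(\boldsymbol{w}(k-1)) = H(\boldsymbol{w}(k-1) - \boldsymbol{w}^\star)$.

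The key step is then to subtract $\boldsymbol{w}^\star$ from both sides of the recursion to obtain the error dynamics $\boldsymbol{w}(k) - \boldsymbol{w}^\star = (I - \alpha H)(\boldsymbol{w}(k-1) - \boldsymbol{w}^\star)$, so that $\|\boldsymbol{w}(k) - \boldsymbol{w}^\star\| \le \|I - \alpha H\|_{\mathrm{op}} \, \|\boldsymbol{w}(k-1) - \boldsymbol{w}^\star\|$. Bounding the operator norm of $I - \alpha H$ using $mI \preceq H \preceq MI$ gives $\|I - \alpha H\|_{\mathrm{op}} \le \max\{|1-\alpha m|, |1 - \alpha M|\}$; with $\alpha = 1/M$ this is $1 - m/M$. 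Iterating the one-step contraction from $k$ down to $0$ yields $\|\boldsymbol{w}(k) - \boldsymbol{w}^\star\| \le (1 - m/M)^k \|\boldsymbol{w}_0 - \boldsymbol{w}^\star\|$, and squaring gives the stated bound $\|\boldsymbol{w}(k) - \boldsymbol{w}^\star\|^2 \le (1 - m/M)^k \|\boldsymbol{w}_0 - \boldsymbol{w}^\star\|^2$. (Strictly, squaring the per-step bound produces $(1-m/M)^{2k}$, which is even smaller than $(1-m/M)^k$ since $0 \le 1 - m/M < 1$, so the claimed inequality follows a fortiori; I would note this so the exponent in the statement is justified rather than appearing loose.)

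For the general smooth (non-quadratic) case one would instead invoke the standard descent lemma ($L_f$ $M$-smooth $\Rightarrow$ sufficient decrease per step) together with the Polyak–Łojasiewicz / strong-convexity inequality $\|\nabla L_f(\boldsymbol{w})\|^2 \ge 2m(L_f(\boldsymbol{w}) - L_f^\star)$, chaining them to get linear convergence of function values, and then converting back to iterate distance via $\tfrac m2\|\boldsymbol{w} - \boldsymbol{w}^\star\|^2 \le L_f(\boldsymbol{w}) - L_f^\star$. But since the theorem's setting is explicitly the linear model, the quadratic argument above is both shorter and tight, so I would present that.

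The main obstacle is essentially bookkeeping rather than conceptual: one must be careful that the contraction factor advertised is exactly $1 - m/M$ (this pins down the admissible step size and forces the $\alpha=1/M$ choice, or at least requires checking that whatever $\alpha$ the paper's gradient-descent update uses does achieve this rate), and one must reconcile the squared-norm statement with the linear per-step contraction — i.e., make explicit that the lemma as stated is actually a (slightly weakened but perfectly valid) consequence of the sharper $(1-m/M)^{2k}$ rate, so there is no gap. Everything else is a routine two-line induction.
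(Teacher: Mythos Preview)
Your proposal is correct but takes a genuinely different route from the paper. The paper does \emph{not} exploit the quadratic structure: it expands $\|\boldsymbol{w}(k)-\boldsymbol{w}^\star\|^2$ via the update rule, applies the strong-convexity first-order inequality $\nabla L_f(\boldsymbol{w})^\top(\boldsymbol{w}-\boldsymbol{w}^\star)\ge L_f(\boldsymbol{w})-L_f(\boldsymbol{w}^\star)+\tfrac{m}{2}\|\boldsymbol{w}-\boldsymbol{w}^\star\|^2$ to the cross term and the smoothness consequence $\|\nabla L_f(\boldsymbol{w})\|^2\le 2M(L_f(\boldsymbol{w})-L_f(\boldsymbol{w}^\star))$ to the gradient-norm term, then sets $\alpha=1/M$ so the function-value terms cancel, arriving directly at the one-step contraction $\|\boldsymbol{w}(k)-\boldsymbol{w}^\star\|^2\le(1-m/M)\|\boldsymbol{w}(k-1)-\boldsymbol{w}^\star\|^2$ on the \emph{squared} distance. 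Your argument instead writes the error dynamics as $(I-\alpha H)$ acting on $\boldsymbol{w}(k-1)-\boldsymbol{w}^\star$ and bounds the operator norm spectrally. What you gain is a sharper rate ($(1-m/M)^{2k}$, which you then weaken to match the statement) and a two-line proof tailored to the linear model; what the paper's argument buys is that it goes through unchanged for any $m$-strongly-convex, $M$-smooth $L_f$, not just quadratics, and it lands exactly on the exponent $(1-m/M)^k$ in the lemma without any a-fortiori remark. Both are valid; your observation that the stated bound is loose relative to what the quadratic actually delivers is accurate.
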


\begin{proof}
  According to gradient descent method (\ref{equ:gd}), we get
  \begin{equation*}
    \begin{aligned}
      &\| \boldsymbol{w}(k) - \boldsymbol{w}^\star \|^2 = \| \boldsymbol{w}(k-1) - \alpha \nabla L_f(\boldsymbol{w}(k-1)) - \boldsymbol{w}^\star \|^2 \\
      &= \| \boldsymbol{w}(k-1) - \boldsymbol{w}^\star \|^2 + \alpha^2 \|\nabla L_f(\boldsymbol{w}(k-1))\|^2 \\
      &\quad - 2 \alpha \nabla L_f(\boldsymbol{w}(k-1))(\boldsymbol{w}(k-1) - \boldsymbol{w}^\star).
    \end{aligned}
  \end{equation*}

  By strong convexity
  \begin{equation}
    \nabla L_f(\boldsymbol{w})(\boldsymbol{w} - \boldsymbol{w}^\star) \ge L_f(\boldsymbol{w}) - L_f(\boldsymbol{w}^\star) + \frac m2 \|\boldsymbol{w}-\boldsymbol{w}^\star\|^2,
  \end{equation}
we further obtain
\begin{equation}
  \begin{aligned}
    &\| \boldsymbol{w}(k) - \boldsymbol{w}^\star \|^2 \le \| \boldsymbol{w}(k-1) - \boldsymbol{w}^\star \|^2 - 2 \alpha (L_f(\boldsymbol{w}(k-1)) \\
    &- L_f(\boldsymbol{w}^\star) + \frac m2 \|\boldsymbol{w}(k-1)-\boldsymbol{w}^\star\|^2) + \alpha^2 \|\nabla L_f(\boldsymbol{w}(k-1))\|^2 \\
    &= \| \boldsymbol{w}(k-1) - \boldsymbol{w}^\star \|^2 - 2 \alpha (L_f(\boldsymbol{w}(k-1)) - L_f(\boldsymbol{w}^\star)) \\ 
    &\qquad - \alpha m \|\boldsymbol{w}(k-1)-\boldsymbol{w}^\star\|^2 + \alpha^2 \|\nabla L_f(\boldsymbol{w}(k-1))\|^2 \\
    &\le \| \boldsymbol{w}(k-1) - \boldsymbol{w}^\star \|^2 - 2 \alpha (L_f(\boldsymbol{w}(k-1)) - L_f(\boldsymbol{w}^\star)) \\
    &\qquad - \alpha m \|\boldsymbol{w}(k-1)-\boldsymbol{w}^\star\|^2 \\
    &\qquad  + 2 \alpha^2 M (L_f(\boldsymbol{w}(k-1)) - L_f(\boldsymbol{w}^\star)) \\
    &\le \| \boldsymbol{w}(k-1) - \boldsymbol{w}^\star \|^2 - \alpha m \|\boldsymbol{w}(k-1)-\boldsymbol{w}^\star\|^2 \\
    &\qquad + 2 \alpha (\alpha M - 1) (L_f(\boldsymbol{w}(k-1)) - L_f(\boldsymbol{w}^\star)).
  \end{aligned}
\end{equation}
Consider $\alpha = \frac1M$, we get
\begin{equation}
  \| \boldsymbol{w}(k) - \boldsymbol{w}^\star \|^2 \le (1- \frac mM)\| \boldsymbol{w}(k-1) - \boldsymbol{w}^\star \|^2.
\end{equation}
Using the above equation repeatedly, we obtain
\begin{equation}
  \begin{aligned}
    \| \boldsymbol{w}(k) - \boldsymbol{w}^\star \|^2 &\le (1- \frac mM)\| \boldsymbol{w}(k-1) - \boldsymbol{w}^\star \|^2  \\
    &\le (1- \frac mM)^2\| \boldsymbol{w}(k-2) - \boldsymbol{w}^\star \|^2 \\
    &\le \cdots \le (1- \frac mM)^k\| \boldsymbol{w}_{0} - \boldsymbol{w}^\star \|^2.
  \end{aligned}
\end{equation}
\end{proof}

\begin{lem}\label{lem:bound}
  Suppose $\boldsymbol{w}^c(k) $ and $\boldsymbol{w}(k)$ are the network parameters with/without causal structure respectively and $\boldsymbol{w}^\star$ is the optimum. It holds that
  \begin{equation}
    \|\boldsymbol{w}^c(k) - \boldsymbol{w}^\star\|^2 \le \delta_k \|\boldsymbol{w}(k) - \boldsymbol{w}^\star\|^2.
  \end{equation}
\end{lem}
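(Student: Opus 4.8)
The plan is to read the causal constraint as gradient descent on a coordinate subspace that already contains the true parameter. In the masked linear model of the previous section, the effective weight used to predict $s_{i,t}$ is $D_i \odot \boldsymbol{w}^c_i$, so every causal iterate is confined to the subspace $\mathcal{C} = \{\boldsymbol{w} : w_{ij} = 0 \text{ whenever } D(i,j)=0\}$, whose dimension is a fraction $\delta$ of the ambient dimension. Because the ground truth $\boldsymbol{s}_t = {\boldsymbol{w}^\star}^\top(\boldsymbol{s}_{t-1},\boldsymbol{a}_{t-1})$ is generated under $\mathcal{G}$, the minimizer $\boldsymbol{w}^\star$ lies in $\mathcal{C}$; both runs start from the same $\boldsymbol{w}_0$, with the causal run starting from its projection $P_{\mathcal{C}}\boldsymbol{w}_0$. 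First I would observe that the causal update $\boldsymbol{w}^c(k) = \boldsymbol{w}^c(k-1) - \alpha\, P_{\mathcal{C}}\nabla L_f(\boldsymbol{w}^c(k-1))$ is exactly projected gradient descent for $L_f$ restricted to $\mathcal{C}$, so that the restriction of $L_f$ to $\mathcal{C}$ is still $m$-strongly convex and $M$-smooth and Lemma \ref{lem:convergence} applies verbatim inside $\mathcal{C}$.

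Next I would compare the two trajectories coordinate-block by coordinate-block. Decomposing the ambient error along $\mathcal{C}\oplus\mathcal{C}^\perp$ and using $P_{\mathcal{C}^\perp}\boldsymbol{w}^\star = 0$ gives $\|\boldsymbol{w}(k)-\boldsymbol{w}^\star\|^2 = \|P_{\mathcal{C}}\boldsymbol{w}(k)-\boldsymbol{w}^\star\|^2 + \|P_{\mathcal{C}^\perp}\boldsymbol{w}(k)\|^2$, where the second term is precisely the error the non-causal model still carries on the $(1-\delta)$-fraction of spurious coordinates that the causal model has forced to zero. When the design decouples the causal and spurious blocks, i.e. $P_{\mathcal{C}^\perp}\nabla^2 L_f\, P_{\mathcal{C}} = 0$ (the natural linear-Gaussian counterpart of the faithfulness assumption), the $\mathcal{C}$-component of the non-causal iterate evolves identically to the causal iterate, so $\boldsymbol{w}^c(k) - \boldsymbol{w}^\star = P_{\mathcal{C}}(\boldsymbol{w}(k)-\boldsymbol{w}^\star)$ and hence $\|\boldsymbol{w}^c(k)-\boldsymbol{w}^\star\|^2 \le \|\boldsymbol{w}(k)-\boldsymbol{w}^\star\|^2$. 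The remaining work is to upgrade this $\le 1$ comparison to the quantitative $\delta_k$: I would track, step by step, that the spurious block contracts at least as fast as the causal block and that the surviving error mass outside $\mathcal{C}$ stays controlled at the density level $\delta$, then compose these per-step estimates over $k$ iterations; fed through Lemma \ref{lem:convergence}, this is what produces the $\delta^k$ factor of Theorem \ref{thm}.

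I expect the quantitative factor to be the main obstacle. The clean identity $\boldsymbol{w}^c(k)-\boldsymbol{w}^\star = P_{\mathcal{C}}(\boldsymbol{w}(k)-\boldsymbol{w}^\star)$ holds exactly only when the Hessian is block-diagonal with respect to $\mathcal{C}\oplus\mathcal{C}^\perp$, and the passage from the dimension-counting intuition (``$\mathcal{C}$ carries a $\delta$-fraction of the coordinates and $\boldsymbol{w}^\star$ vanishes off $\mathcal{C}$, so roughly a $\delta$-fraction of the error mass lives in $\mathcal{C}$'') to an inequality valid for every $k$ needs either an isotropy-type condition on the initialization and data covariance or a finer spectral argument. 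I would state explicitly which of these is invoked; in the fully general case one can still retain $\delta_k$ as a constant strictly below $1$, which keeps $\xi<1$ and only perturbs the rate constant, not the exponential decay claimed in Theorem \ref{thm}.
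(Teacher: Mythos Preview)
Your orthogonal decomposition $\|\boldsymbol{w}(k)-\boldsymbol{w}^\star\|^2 = \|P_{\mathcal{C}}\boldsymbol{w}(k)-\boldsymbol{w}^\star\|^2 + \|P_{\mathcal{C}^\perp}\boldsymbol{w}(k)\|^2$ is exactly the core of the paper's argument, written there coordinatewise as $\|\boldsymbol{w}(k)-\boldsymbol{w}^\star\|^2 = \|\boldsymbol{w}^c(k)-\boldsymbol{w}^\star\|^2 + \|\boldsymbol{w}'_k\|^2$ with $\boldsymbol{w}'_k=(1-D)\odot\boldsymbol{w}(k)$. On that part you and the paper agree.

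Where you diverge is in the meaning of $\boldsymbol{w}^c(k)$. You treat it as the iterate of a \emph{separate} projected gradient descent inside $\mathcal{C}$ and then spend the rest of the proposal trying to show that this separate trajectory coincides with $P_{\mathcal{C}}\boldsymbol{w}(k)$, which forces you into a block-diagonal Hessian assumption ($P_{\mathcal{C}^\perp}\nabla^2 L_f\,P_{\mathcal{C}}=0$) and a delicate per-step comparison. The paper sidesteps all of this: it simply \emph{takes} $\boldsymbol{w}^c(k):=D\odot\boldsymbol{w}(k)$ as the definition of the causally constrained parameter (invoking Definition~\ref{def:causality}), so $\boldsymbol{w}^c(k)=P_{\mathcal{C}}\boldsymbol{w}(k)$ holds by fiat and the Pythagorean identity finishes the job in one line. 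No dynamical argument, no Hessian structure, no isotropy of the initialization is used.

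Likewise, your concern about upgrading the ``$\le 1$'' comparison to a quantitative $\delta_k$ is moot in the paper's treatment: $\delta_k$ is not derived, it is \emph{defined} after the fact as $\delta_k:=1/(1+\rho_k)$ where $\rho_k$ is a lower bound on $\|\boldsymbol{w}'_k\|^2/\|\boldsymbol{w}^c(k)-\boldsymbol{w}^\star\|^2$. The link to the graph density $\delta$ is made only informally (``whose value is related to the sparsity of causal matrix $D$''), and the final $\delta$ in Theorem~\ref{thm} is just $\max_j \delta_j$. So the spectral/isotropy machinery you are preparing is not what the paper does; your route would give a more honest bound if carried through, but it is a genuinely different and harder argument than the two-line one the paper actually supplies.
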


\begin{proof}
  According to Definition \ref{def:causality}, we have
  \begin{equation}
    \boldsymbol{w}^c(k) = D \odot \boldsymbol{w}(k),
  \end{equation}
  where $D$ is the binary causal matrix. Hence, we rewrite $\boldsymbol{w}(k)$ as
  \begin{equation}
    \boldsymbol{w}(k) = D \odot \boldsymbol{w}(k) + (1-D) \odot \boldsymbol{w}(k) = \boldsymbol{w}^c(k) + \boldsymbol{w}'_k.
  \end{equation}
  Note that we have $\boldsymbol{w}^\star = D \odot \boldsymbol{w}^\star$. Then we obtain
  \begin{equation}
    \begin{aligned}
      &\|\boldsymbol{w}(k) - \boldsymbol{w}^\star\|^2 = \sum_{i,j}(w_{ij}(k) - w_{ij}^\star)^2\\
      &= \sum_{i,j}\left[D_{ij} \times w_{ij}(k) - w_{ij}^\star + (1-D_{ij}) \times w_{ij}(k)\right]^2
      \\
      &= \sum_{i,j}\left[(D_{ij} \times w_{ij}(k) - w_{ij}^\star)^2 + ((1-D_{ij}) \times w_{ij}(k))^2 \right] \\
      &= \sum_{i,j}(D_{ij} \times w_{ij}(k) - w_{ij}^\star)^2 + \sum_{i,j}((1-D_{ij}) \times w_{ij}(k))^2  \\
      &= \sum_{i,j}(w^c_{ij}(k) - w_{ij}^\star)^2 + \sum_{i,j}(w'_{ij})(k)^2  \\
      &= \| \boldsymbol{w}^c(k) - \boldsymbol{w}^\star\|^2 + \|\boldsymbol{w}'_k \|^2 \ge (1 + \rho_k) \| \boldsymbol{w}^c(k) - \boldsymbol{w}^\star\|^2,
    \end{aligned}
  \end{equation}
  where $\rho_k \in [0, +\infty)$ is the lower bound of the ratio between $\|\boldsymbol{w}'_k\|^2$ and $\|\boldsymbol{w}^c(k) - \boldsymbol{w}^\star\|^2$ whose value is related to the sparsity of causal matrix $D$. By setting $\delta_k = \frac{1}{1 + \rho_k}$, we complete the proof of Lemma \ref{lem:bound}.
\end{proof}

The convexity of $L_f(\boldsymbol{w})$ implies
\begin{equation}\label{equ:convexity}
  L_f(\boldsymbol{w}^c(k)) - L_f(\boldsymbol{w}^\star) \le \frac M2 \| \boldsymbol{w}^c(k) - \boldsymbol{w}^\star \|^2.
\end{equation}

By applying Lemma \ref{lem:convergence} and Lemma \ref{lem:bound} into (\ref{equ:convexity}) and denote
\begin{equation}
  \delta = \max \{\delta_0, \delta_1, \ldots, \delta_k\},
\end{equation}
we can obtain Theorem \ref{thm}.

\section{Synthetic environment}\label{ape:abl_stu}

\subsection{More experiment details}\label{ape:syn_detail}

\begin{figure}[htbp]
    \centering
    \includegraphics[width=\linewidth]{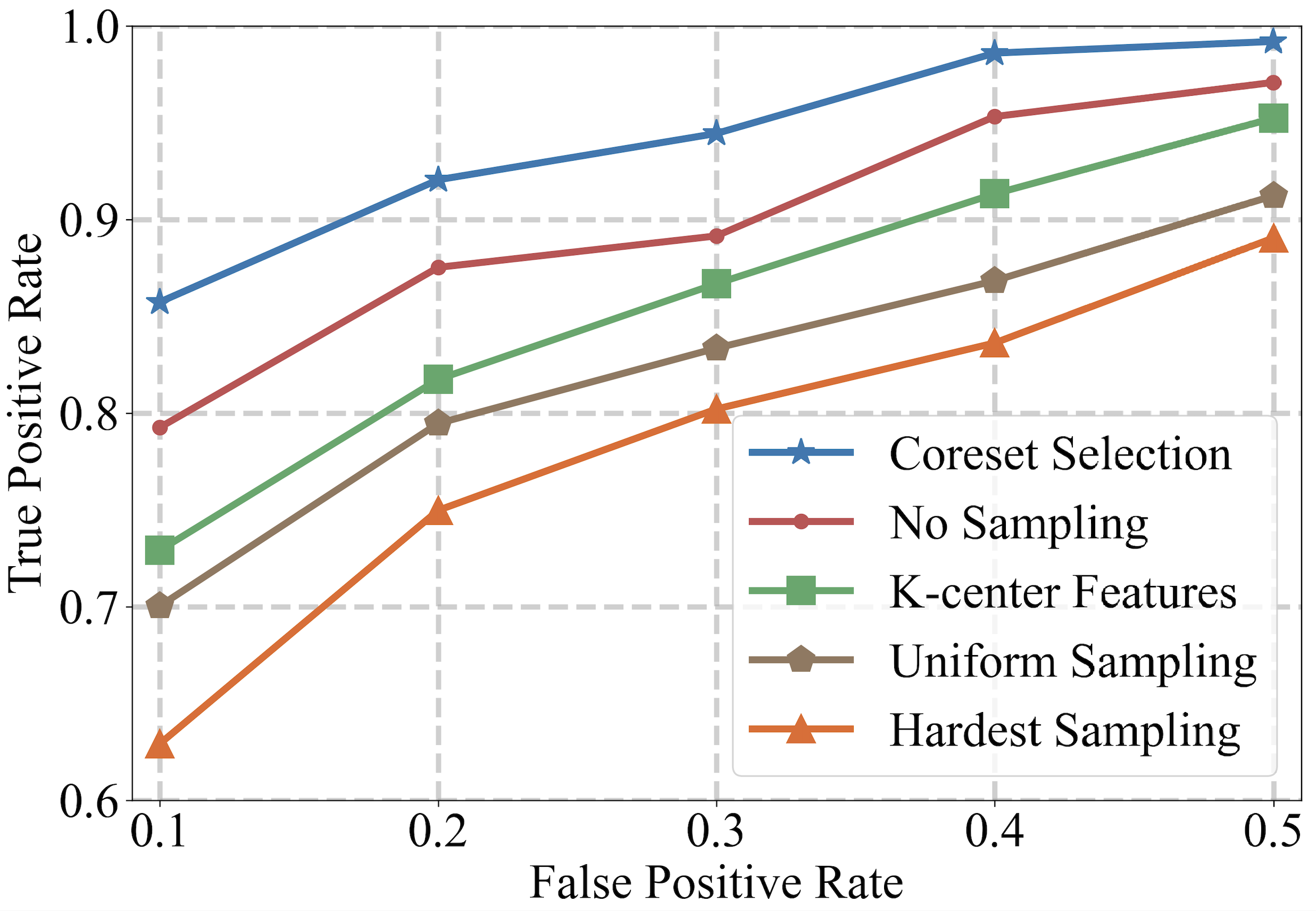}
    \caption{ROC curves of online causal discovery using different sampling methods.}
    \label{fig:online_pc}
\end{figure}
\begin{table*}[htbp]
  \centering
  \begin{tabular}{cccccc}
  \hline
  Method & No Sampling & Uniform & K-center & Hardest & Coreset \\
  \hline
  AUC & 0.907 $\pm$ 0.084 & 0.892 $\pm$ 0.065 & 0.891 $\pm$ 0.175 & 0.816 $\pm$ 0.353 & \textbf{0.969 $\pm$ 0.016} \\
  Precision & 0.953 $\pm$ 0.074 & 0.914 $\pm$ 0.079 & 0.942 $\pm$ 0.089 & 0.892 $\pm$ 0.081 & \textbf{0.970 $\pm$ 0.064} \\
  F1-score & 0.831 $\pm$ 0.094 & 0.842 $\pm$ 0.104 & 0.890 $\pm$ 0.139  & 0.852 $\pm$ 0.067 & \textbf{0.928 $\pm$ 0.096} \\
  Time & 4.360 $\pm$ 2.888 & 0.880 $\pm$ 0.453 & 0.909 $\pm$ 0.414 & 1.389 $\pm$ 0.760 & \textbf{0.379 $\pm$ 0.267} \\
  \hline
  \end{tabular}
  \caption{Results of different selection methods on online causal discovery.}
  \label{tab:kci-speed-up}
\end{table*}

\paragraph{Implementation.} Double DQN \cite{van-deep-2016} is used to train the exploration policy of agents, where both the evaluation network and the target network are three-layer fully connected networks with relu activations. The corresponding world models are designed as MLPs without/with activation function in linear/nonlinear module respectively, with 32 and 8 hidden nodes. In all of these settings, we share network parameters except for the last layer which is separately decomposed. We use Adam optimizer to learn the above models with $\eta = 0.1$ and a learning rate of 1e-3. 

\paragraph{The $\kappa$ value for online causal discovery.}
\begin{figure}
    \centering
    \includegraphics[width=\linewidth]{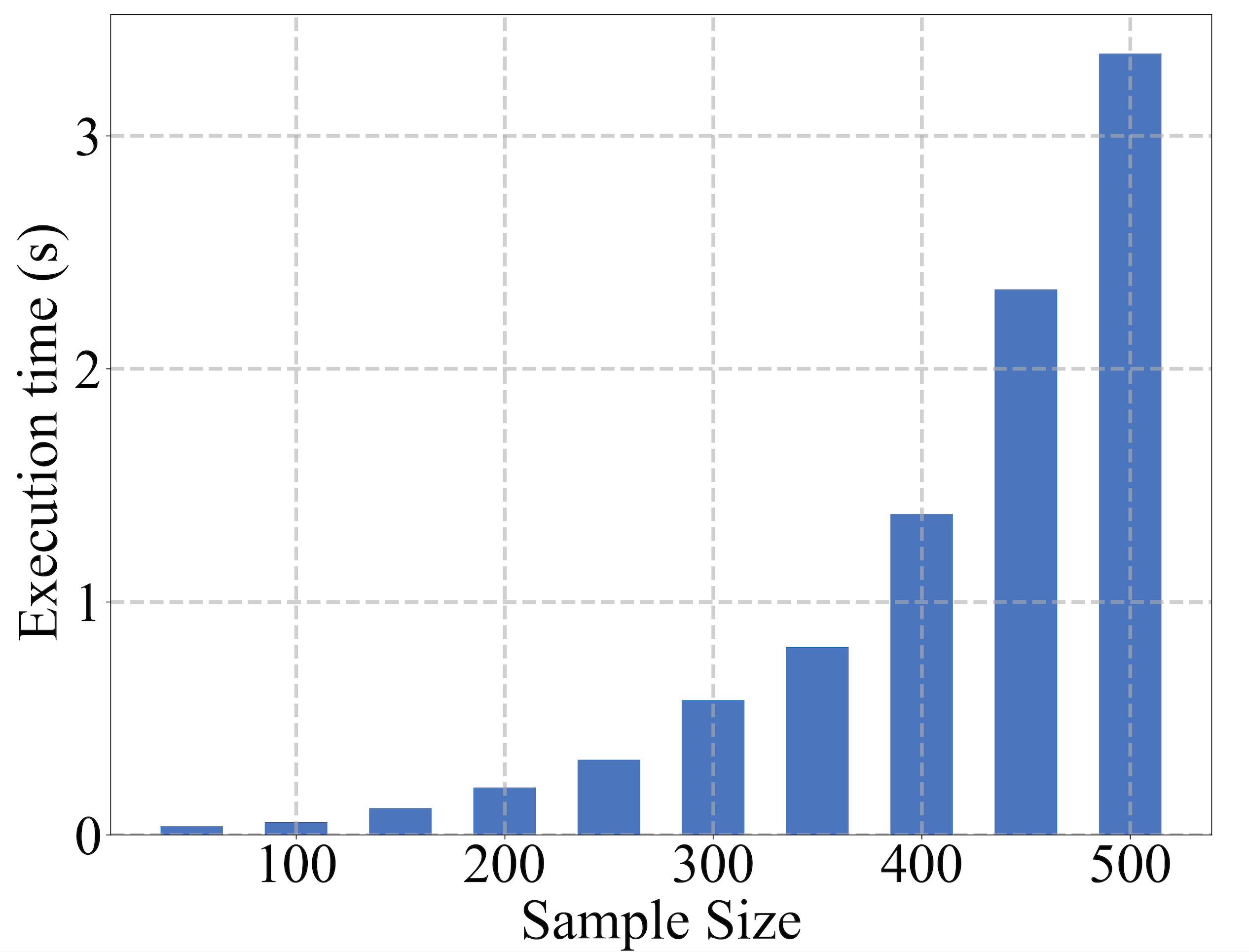}
    \caption{Time cost for the PC algorithm when $|\mathcal{U}| = 12$ and $|\mathcal{V}|=10$.}
    \label{fig:pc-time}
\end{figure}
As can be seen in Figure \ref{fig:pc-time}, the execution time of the PC algorithm exhibits a noticeable inflection when the sample size is around $350$. This provides a valuable reference for determining an appropriate value for the selection number $\kappa$. As a result, we empirically set $\kappa=350$ in the synthetic environment and $\kappa=0.7 \times |\mathcal{B}_t|$ for real-world applications, where $|\mathcal{B}_t|$ represents the number of collected data at time $t$.

\paragraph{Efficient Causal Discovery.} To speed up PC algorithm with KCI-test, we design an efficient online causal discovery using selection methods. Figure \ref{fig:online_pc} illustrates the corresponding ROC curves of different selection methods including Uniform Sampling, K-center Features \cite{nguyen-variational-2017}, Hardest Sampling \cite{Aljundi-Online-2019} and the Coreset Selection method \cite{yoon-online-2021} we use. Tables \ref{tab:kci-speed-up} summarizes the performance of different sampling methods in our synthetic environments. These exciting experimental results demonstrate the superiority of the Coreset Selection method we used in improving the efficiency of causal discovery.

\paragraph{Causal Exploration Experiments.} The causal matrix in our synthetic environment is set as a lower triangular matrix whose elements are generated from the uniform distribution $[-8, 8]$. After that, for each edge in the graph, we randomly drop it out with an empirically chosen probability $1 - p = 0.8$. Besides, the covariance $\Sigma$ is a diagonal matrix whose elements are randomly generated from $[0, 0.1]$. For both linear and nonlinear conditions, we have conducted sufficient experiments. Figure \ref{fig:appendix_synthtic} shows the remaining experimental results that are not fully presented in the main body. 

In order to enhance the agent's sensitivity to causal informative data, we design a novel form of active reward. Here we investigate the impact of $\beta$ in this new intrinsic reward formulation on causal exploration which is formulated as $r_t = r^i_t + \beta r^a_t$ in equation (\ref{equ:active}). Figure \ref{fig:beta} illustrates the evolution of prediction error over training time for different values of $\beta$. We see that incorporating active reward for exploration continuously improves the performance of the world model. We set $\beta=0.5$ for linear environments and $\beta=3$ for both nonlinear settings and real-world applications according to the results in Figure \ref{fig:beta}, respectively. 

\begin{figure*}[tbp]
  \centering
    \includegraphics[width=.9\linewidth]{./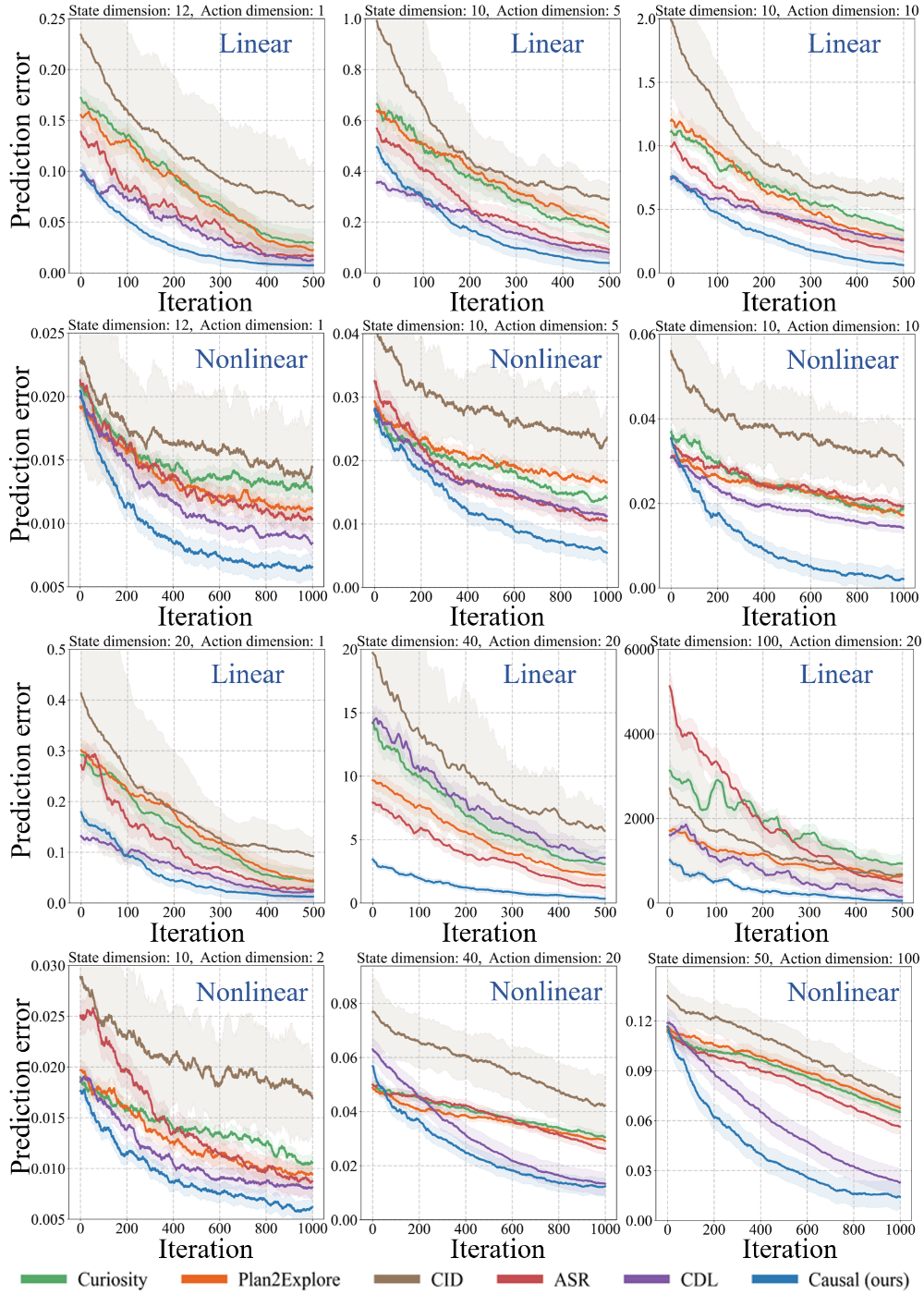}
  \caption{Results on synthetic datasets.}
  \label{fig:appendix_synthtic}
\end{figure*}

\begin{figure*}
  \centering
	\subfigure[]{
	\includegraphics[width=.9\linewidth]{./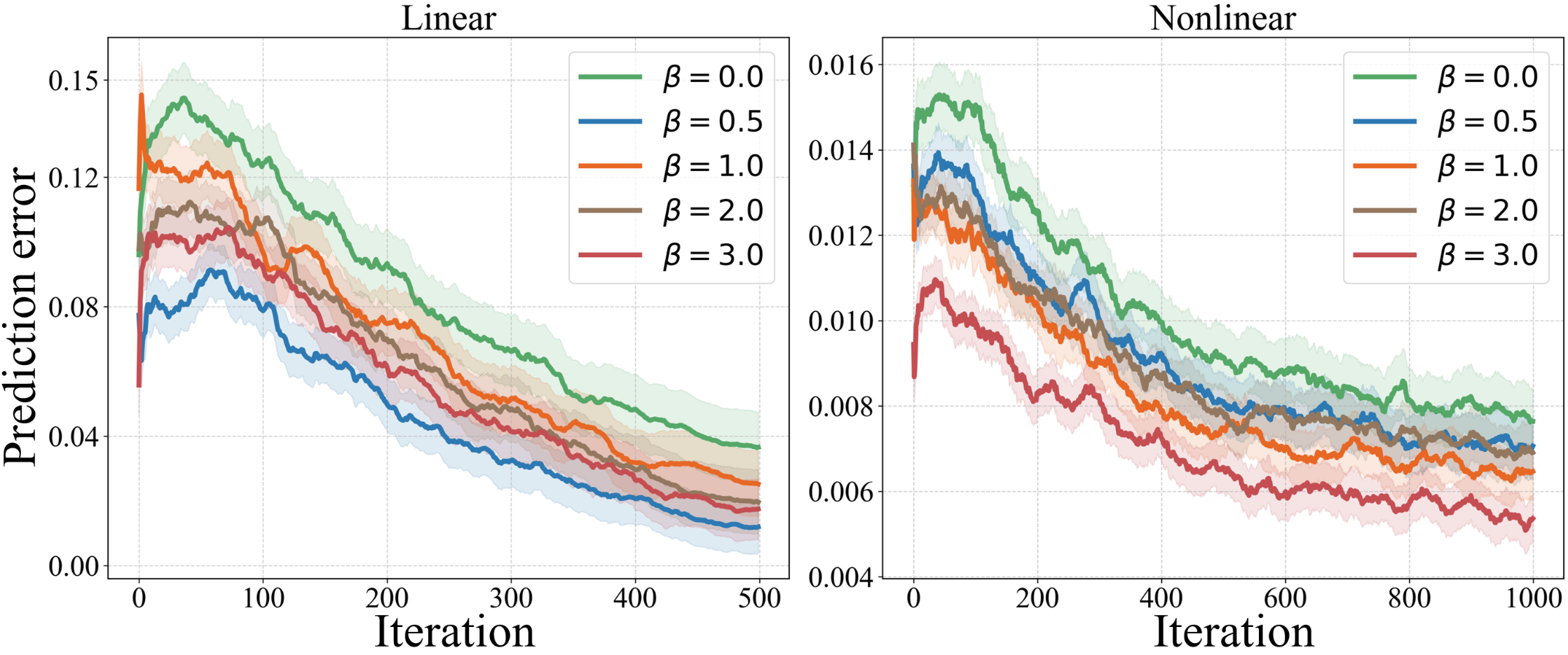} \label{fig:beta}
}
	\subfigure[]{
	\includegraphics[width=.9\linewidth]{./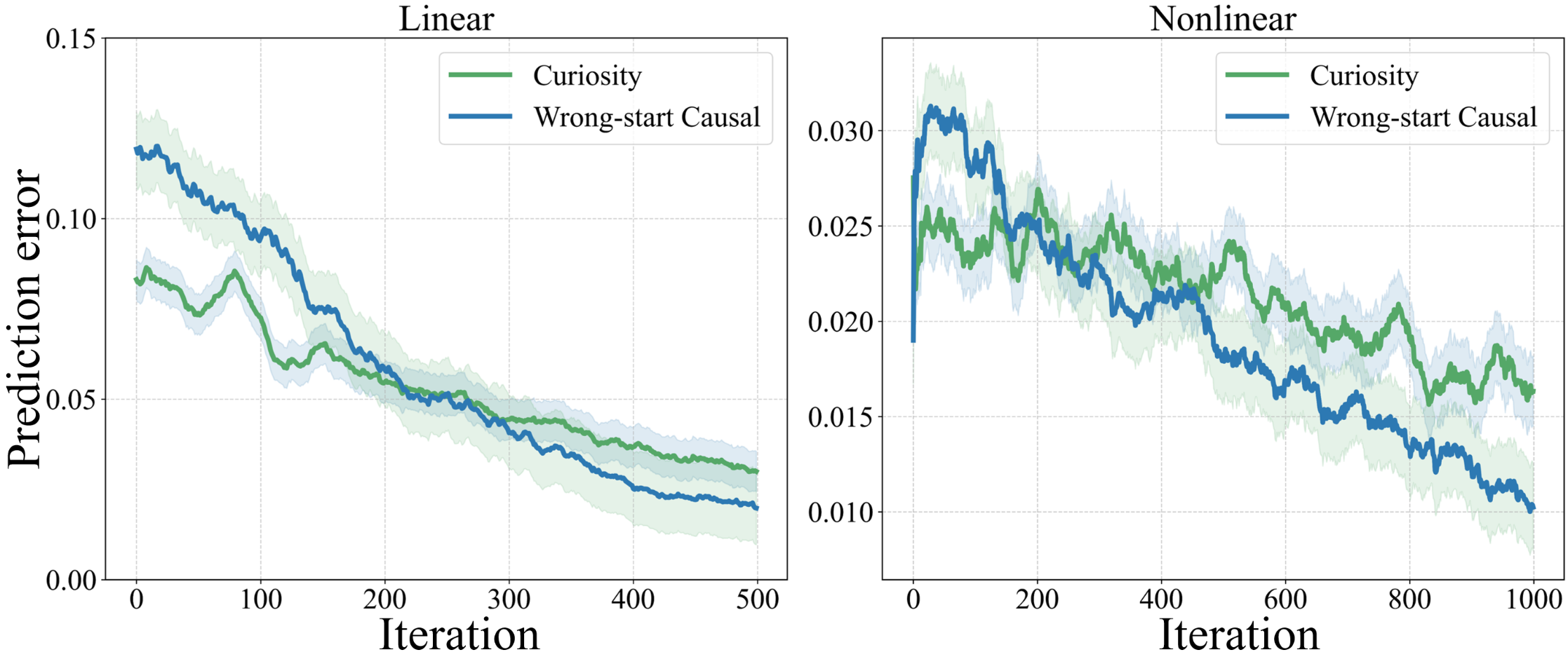} \label{fig:underestimation}
}
	\subfigure[]{
	\includegraphics[width=.9\linewidth]{./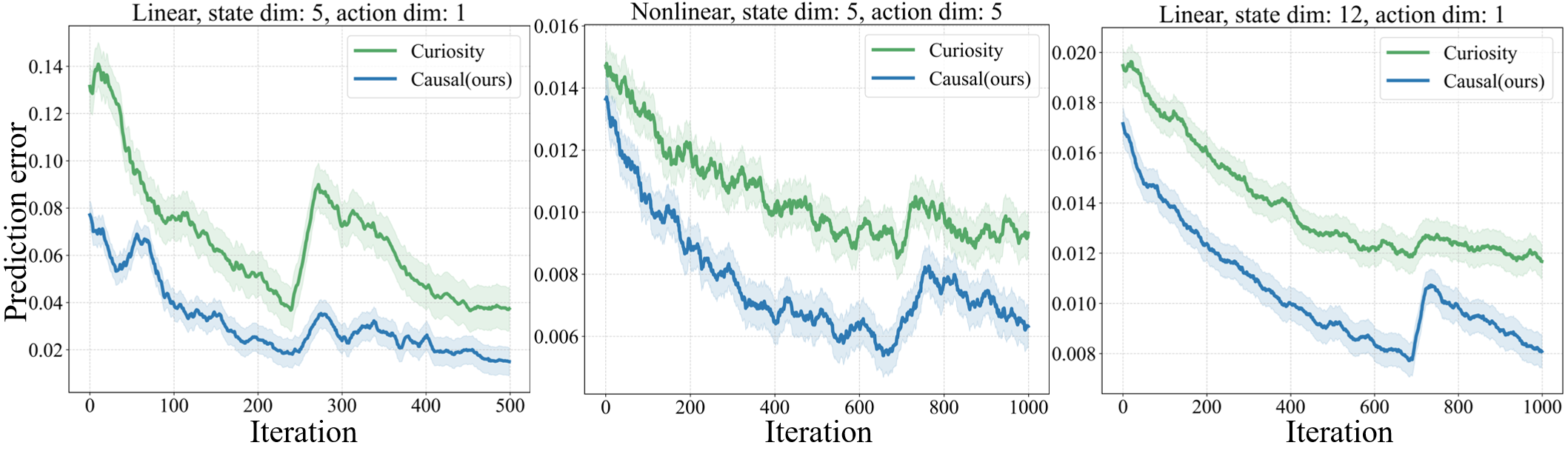} \label{fig:correct}
}
\caption{(a) Prediction errors of causal exploration for different $\beta$; (b) Performance on underestimation scenarios; (c) Scenarios with structural changes.}
\end{figure*}

\begin{figure*}
  \centering
	\subfigure[]{
    \resizebox{.65\linewidth}{0.3\linewidth}{
    \includegraphics[width=0.3\linewidth]{./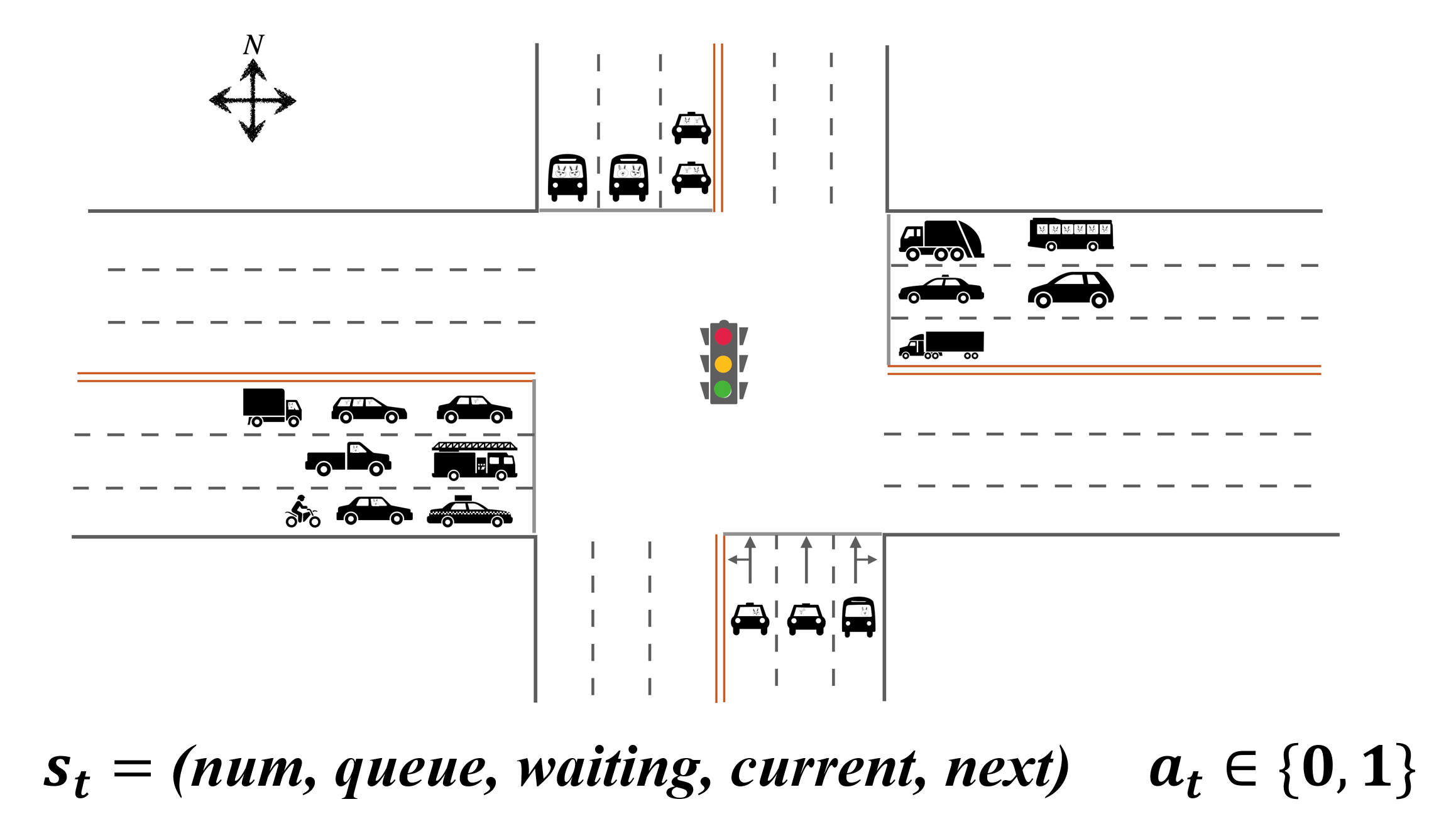} \label{fig:traffic_illu}
    }
}
	\subfigure[]{
    \resizebox{.15\linewidth}{.3\linewidth}{
    \begin{tikzpicture}[mycircle/.style={circle,draw=black!50,fill=white!20,very thick, minimum size=15mm, inner sep=0pt, font=\fontsize{18}{18}\selectfont},myfirstnode/.style={circle,draw=black!30,fill=black!20,very thick, minimum size=15mm, inner sep=0pt, font=\fontsize{18}{18}\selectfont}]
        \node (n1) at (0, 4) [mycircle] {$s_{1, t}$};
        \node (n2) at (0, 2)[mycircle] {$s_{2, t}$};
        \node (n3) at (0, 0)[mycircle] {$s_{3, t}$};
        \node (n4) at (0, -2)[mycircle] {$s_{4, t}$};
        \node (n5) at (0, -4)[mycircle] {$s_{5, t}$};
        \node (s1)at ($(n3)+(-3.5,5)$)  [mycircle] {$s_{1, t-1}$} edge[->, bend angle=0, draw=black!70, >=stealth, line width=2.5pt](n1) edge[->, bend angle=0, draw=black!70, >=stealth, line width=2.5pt](n3);
        \node (s2)at ($(n3)+(-3.5,3)$)  [mycircle] {$s_{2, t-1}$} edge[->, bend angle=0, draw=black!60, >=stealth, line width=2.5pt](n2) edge[->, bend angle=0, draw=black!60, >=stealth, line width=2.5pt](n3);
        \node (s3)at ($(n3)+(-3.5,1)$)  [mycircle] {$s_{3, t-1}$} edge[->, bend angle=0, draw=black!70, >=stealth, line width=2.5pt](n2) edge[->, bend angle=0, draw=black!70, >=stealth, line width=2.5pt](n3);;
        \node (s4)at ($(n3)+(-3.5,-1)$) [mycircle] {$s_{4,t-1}$}  edge[->, bend angle=0, draw=black!70, >=stealth, line width=2.5pt](n4);
        \node (s5)at ($(n3)+(-3.5,-3)$)  [mycircle] {$s_{5,t-1}$} edge[->, bend angle=0, draw=black!70, >=stealth, line width=2.5pt](n1) edge[->, bend angle=0, draw=black!70, >=stealth, line width=2.5pt](n2)  edge[->, bend angle=0, draw=black!70, >=stealth, line width=2.5pt](n3) edge[->, bend angle=0, draw=black!80, >=stealth, line width=2.5pt](n5);
        \node (s6)at ($(n3)+(-3.5, -5)$) [myfirstnode] {$a_{t-1}$} edge[->, bend angle=0, draw=blue!70, >=stealth, line width=2.5pt](n4);
    \end{tikzpicture}
    }
    \label{fig:traffic_causal}
}
\caption{(a) Illustration of the traffic-signal-control environment. (b) Causal graph learned through exploration.}
\end{figure*}

\subsection{Generalization to underestimation scenarios}\label{ape:wrong}
In some data-hungry scenarios, there may be insufficient data for causal discovery, leading to underestimation of the causal structure, which makes continuous data collection and causal structure correction important components. In other words, the causal structure inferred from causal discovery algorithms may deviate from the ground truth ones, which is particularly prone to occur under conditions of limited sample size or during the initial stages of exploration. Consequently, we conduct an evaluation of our proposed algorithm's performance when the estimated causal structure exhibits insufficiencies or redundancies, a scenario termed "underestimation". To highlight the advantages of our sharing-decomposition schema in addressing such problems, we deliberately provide the agent with completely wrong causal information. An erroneous causal graph $\mathcal{G}'$ is supplied to the agent during the initial time steps, namely $t < N$ where $N$ is the period for causal discovery. We introduce perturbations to the true graph $\mathcal{G}$ by randomly eliminating or adding edges with a probability of $p'=0.8$, generating $\mathcal{G}'$. Figure \ref{fig:underestimation} shows the corresponding performance of causal exploration on synthetic data. Indeed, the agent exhibits an impressive ability to correct its causal exploration trajectory in the face of misdirection. Nonetheless, it is worth noting that such a schema only serves as a technique to mitigate the impact of underestimation. The reliability of causal discovery algorithms is the essential guarantee for causal exploration.

\subsection{Generalization to scenarios with causal structural changes}\label{ape:causal_change}
In real-world scenarios, causal structure between variables can often change due to sudden disturbance. For instance, causal relationships between economic variables like stock prices, interest rates, and inflation can be subject to rapid changes caused by market crashes or policy changes.

To evaluate the effectiveness of our approach in handling such mutation, we conduct experiments in a scenario where the causal structure changes randomly once. We use our simulation model to generate the data and compare our method to a non-causal approach. Figure \ref{fig:correct} illustrates the advantages of our approach in tackling such a challenging task.

Our sharing-decomposition schema enables the agents to quickly adapt to structural changes and make appropriate adjustments. This also demonstrates the robustness of our method, which allows for timely correction of errors in the causal structure. By sharing the same decomposition modules across different time steps and tasks, our method can effectively leverage previous knowledge and transfer it to new situations, while also being flexible enough to accommodate changes in the causal structure. In addition, the ability to adapt to changing causal structures can improve the generalization ability of our method, making it more applicable to a wider range of real-world tasks. 

In our future research, we plan to expand our work to situations where changes occur within the model. In these cases, during the model learning phase, it becomes crucial to effectively detect these changes and promptly update the model. Additionally, when it comes to policy learning, a key challenge is determining the most suitable model to utilize. We may encounter entirely new models that have not been encountered before, adding an additional layer of complexity to our research.

\section{Traffic Signal Control}\label{ape:traffic}
Traffic signal control is an important means of mitigating congestion in traffic management. Compared to using fixed-duration traffic signals, an RL agent learns a policy to determine real-time traffic signal states based on current road conditions. The state observed by the agent at each time consists of five dimensions of information, namely the number of vehicles, queue length, average waiting time in each lane plus current and next traffic signal states. Action here is to decide whether to change the traffic signal state or not. For example, suppose the traffic signal is red at time $t$, if the agent takes action $1$, then it will change to green at the next time $t+1$, otherwise, it will remain red. Following the work in IntelliLight \cite{wei-intellilight-2018}, the traffic environment in our experiment is a three-lane intersection. Table \ref{tab:dataset} gives a detailed description, and Figure \ref{fig:traffic_illu} provides an illustration. The estimated causal structures are given in Figure \ref{fig:traffic_causal}.

\paragraph{Experiment details and analysis.}  We first only use prediction-based causal exploration to learn forward dynamic world models under the same traffic environment in IntelliLight. Then, the agent learns a policy for traffic signal control task in our learned world models, which avoids the high-cost interaction with real traffic environment.  For consistency and easy comparison, we use the same DQN network from IntelliLight to train our causal exploration agent.

\begin{figure*}[tbp]
  \centering
  \includegraphics[width=.8\linewidth]{./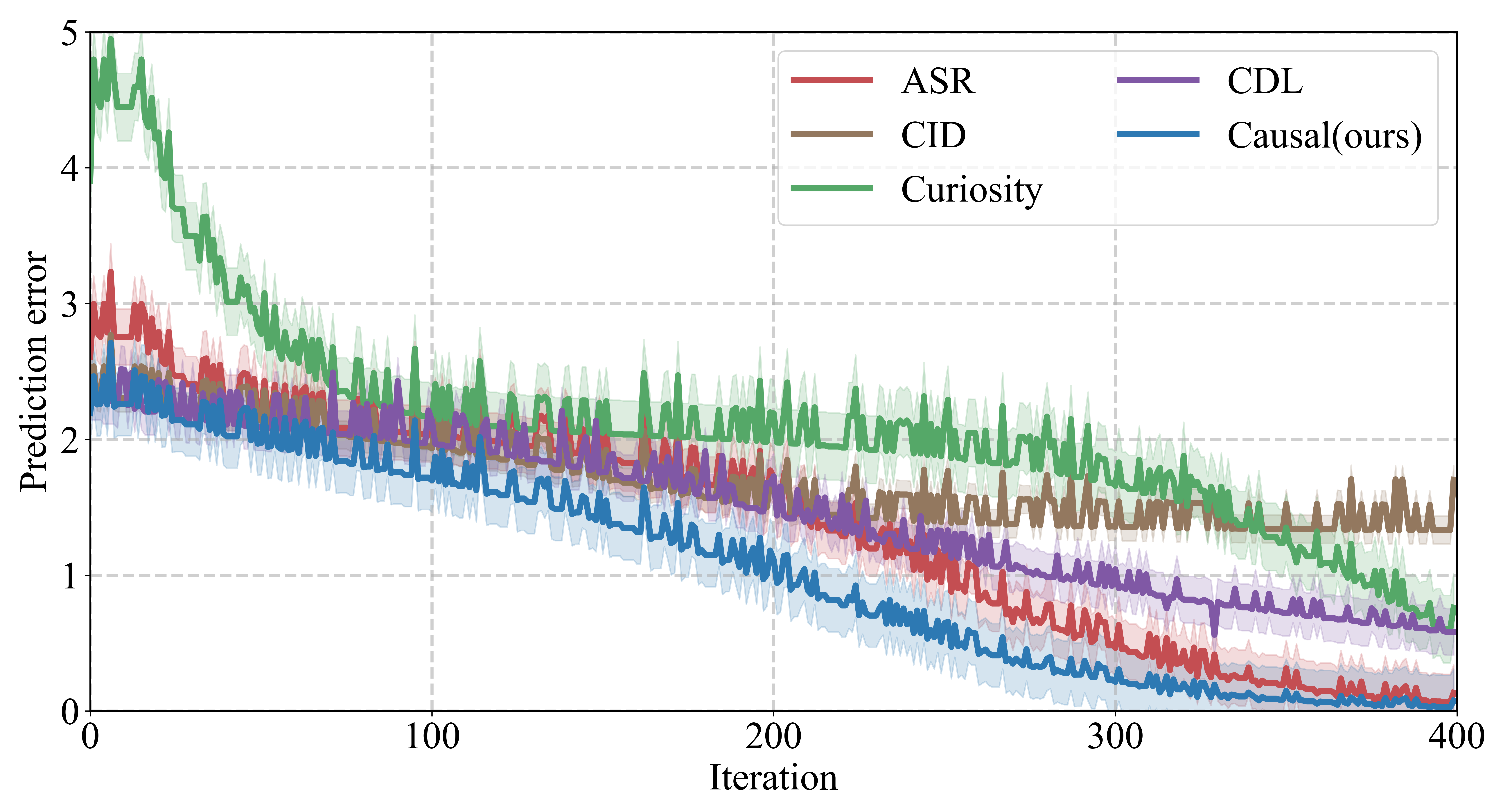} 
  \caption{Prediction errors in traffic light control.}
  \label{fig:traffic_pred}
\end{figure*}

After that, we solve the traffic signal control task in our world model with no environment interaction in a zero-shot manner. Figure \ref{fig:traffic_pred} visualizes the prediction errors of different models during the exploration process. The corresponding causal graph is shown in Figure \ref{fig:traffic_causal}. As is illustrated, the state of the traffic signal at the next time step $s_{4,t}$ is causally linked to the previous state $s_{4, t-1}$ and action $a_{t-1}$, which is in line with the definition of traffic signal control tasks. The queue length $s_{2,t}$ is determined by previous queue length $s_{2,t-1}$ and waiting time $s_{3, t-1}$ plus the traffic state $s_{5, t-1}$. Factors influencing the waiting time $s_{3,t}$ include the number of vehicles $s_{1,t-1}$ and the queue length $s_{2, t-1}$. These results align well with the common-sense reasoning.
\begin{table}[tbp]
  \centering
  \resizebox{\linewidth}{!}{
  \begin{tabular}{ccccc}
  \hline
  \multirow{2}{*}{Traffic flow setting} & \multirow{2}{*}{Directions} & \multicolumn{2}{c}{Arrival Rate (cars/s)} & \multirow{2}{*}{Duration ($\times 10^3$ s)} \\
  \cline{3-4}
    &   & Mean & Std &  \\
  \hline
  \multirow{2}{*}{Complex traffic} & East-West & $0.211$ & $0.023$ & \multirow{2}{*}{216} \\
   & South-North & $0.155$ & $0.030$ &   \\
  \hline
  \end{tabular}}
  \caption{Traffic Dataset Description}
  \label{tab:dataset}
\end{table}

\section{More Results of Mujoco tasks}\label{ape:mujoco}
We use PPO algorithm \cite{schulman-proximal-2017} for optimization during both the task-agnostic exploration and policy learning stages and adopt the hyperparameters from Table 3 of PPO with a trajectory length of $2048$, an Adam stepsize of 3e-4, a minibatch size of 64, a discount factor ($\gamma=0.99$), a GAE parameter ($\lambda=0.95$), and a clipping parameter ($\epsilon=0.2$). Both the actor-critic network and the world model are 2-(hidden)-layer neural networks, consisting of 256 and 64 hidden nodes respectively. Activation functions are Tanh and ReLU here.

Performance of causal exploration on some other MuJoCo tasks are provided in Figure \ref{fig:mujuco}. Predictions given by world models under causal structural constraints are more accurate and stable than those of other methods. The learned world model of causal exploration provides the agent with more information in the following policy learning stage, resulting in higher scores achieved in a shorter time. Figure \ref{fig:mujuco_graph} illustrates the identified causal structures during exploration, which explains for the performance gain.
\begin{figure}[htbp]
  \centering
    \includegraphics[width=.9\linewidth]{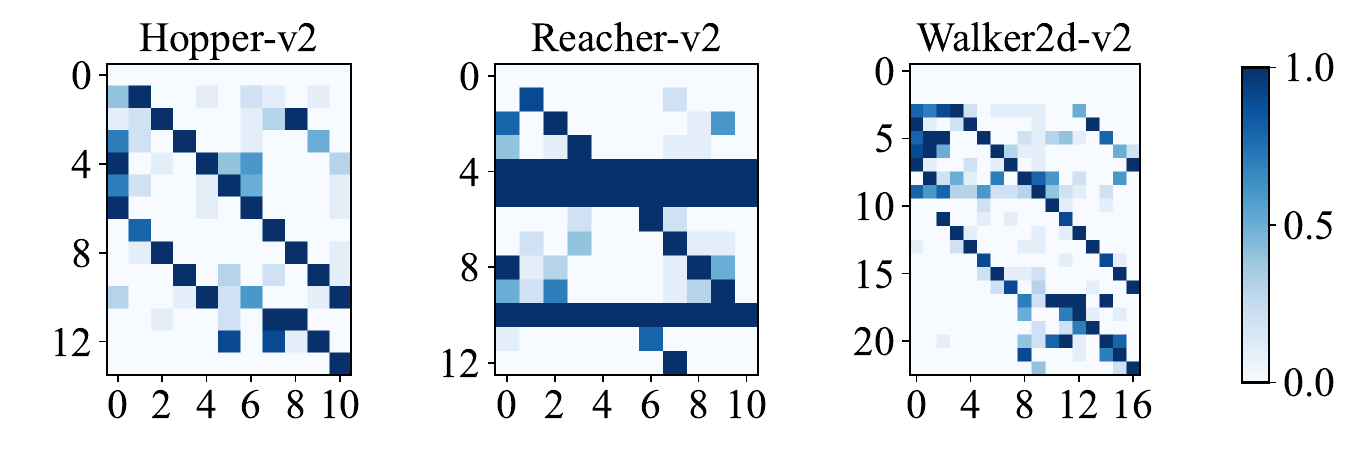}
  \caption{Identified causal structures for MuJoCo tasks.}
  \label{fig:mujuco_graph}
\end{figure}

We also conduct several experiments to test the performance of causal exploration with a different form of intrinsic reward. To be specific, we formulate our world model as $\mu_{w^c}(\boldsymbol{s}_t, \boldsymbol{a}_t), \sigma^2_{w^c}(\boldsymbol{s}_t, \boldsymbol{a}_t)$ to model the transition probability as $p(\boldsymbol{s}_{t+1} \mid \boldsymbol{s}_t, \boldsymbol{a}_t) \sim \mathcal{N}(\boldsymbol{s}_{t+1}; \mu_{w^c}, \sigma^2_{w^c})$. Then, the negative log-likelihood is used both for the world model learning and causal exploration, which is a replacement for equation (\ref{equ:pred_loss}) and (\ref{equ:intrinsic}), and is formulated as:
\begin{equation}
\begin{aligned}
    L_{(\mu_{w^c}, \sigma^2_{w^c})} &= \frac{(\boldsymbol{s}_{t+1} - \mu_{w^c}(\boldsymbol{s}_t, \boldsymbol{a}_t))^2}{2\sigma^2_{w^c}(\boldsymbol{s}_t, \boldsymbol{a}_t)} + \frac12 \log \sigma^2_{w^c}(\boldsymbol{s}_t, \boldsymbol{a}_t), \\
    r^i_t &= \frac \eta 2 L_{(\mu_{w^c}, \sigma^2_{w^c})}.
\end{aligned}
\end{equation}
Corresponding results are shown in Figure \ref{fig:mujoco_change_model}. However, various forms of intrinsic rewards don't exhibit significant differences in performance. In some tasks, the introduction of an additional covariance network even lead to performance not as favorable as when directly using regression loss.

\begin{figure*}[htbp]
  \centering
  \subfigure[]{
	\includegraphics[width=.8\linewidth]{./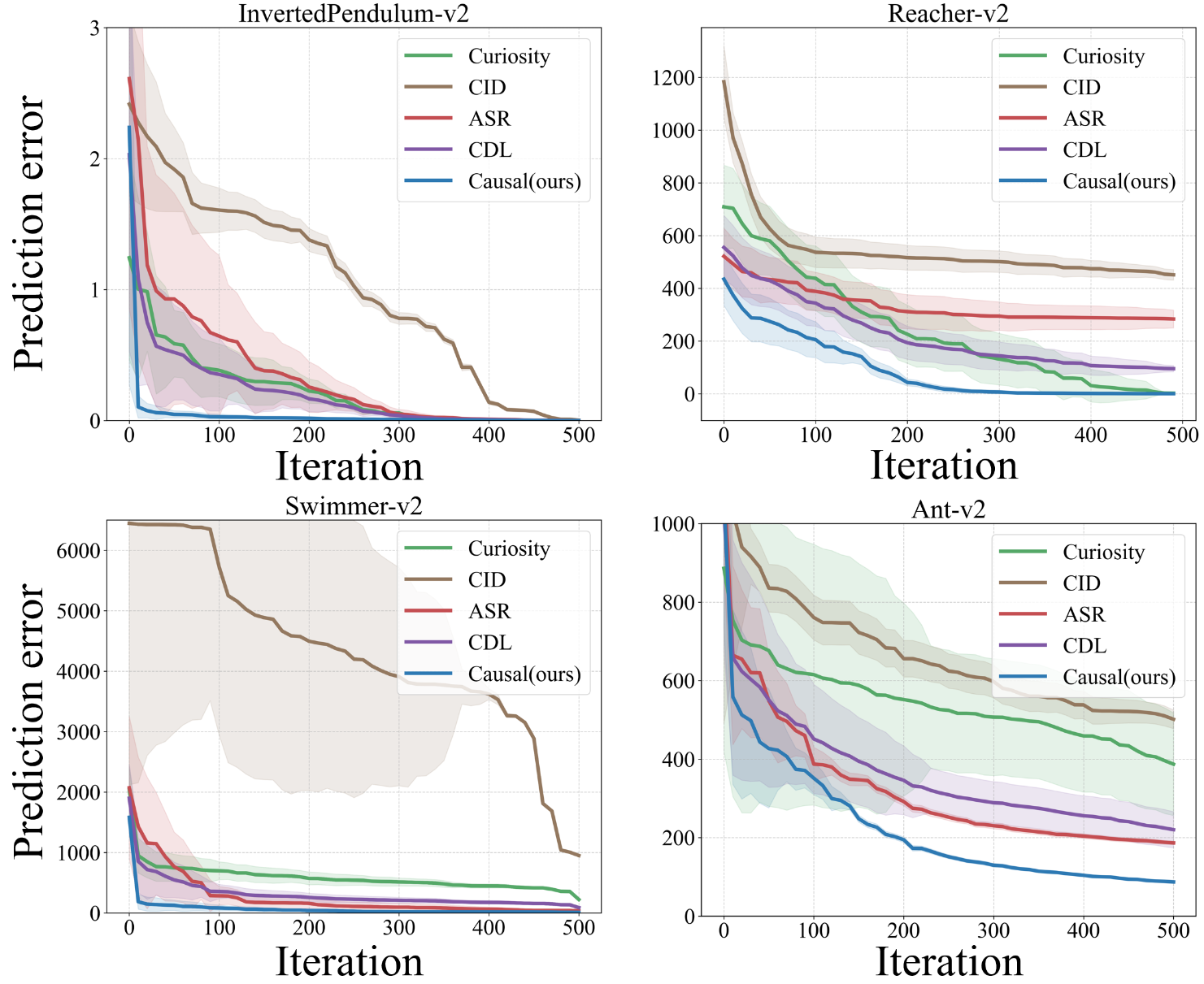}
}
  \subfigure[]{
	\includegraphics[width=.8\linewidth]{./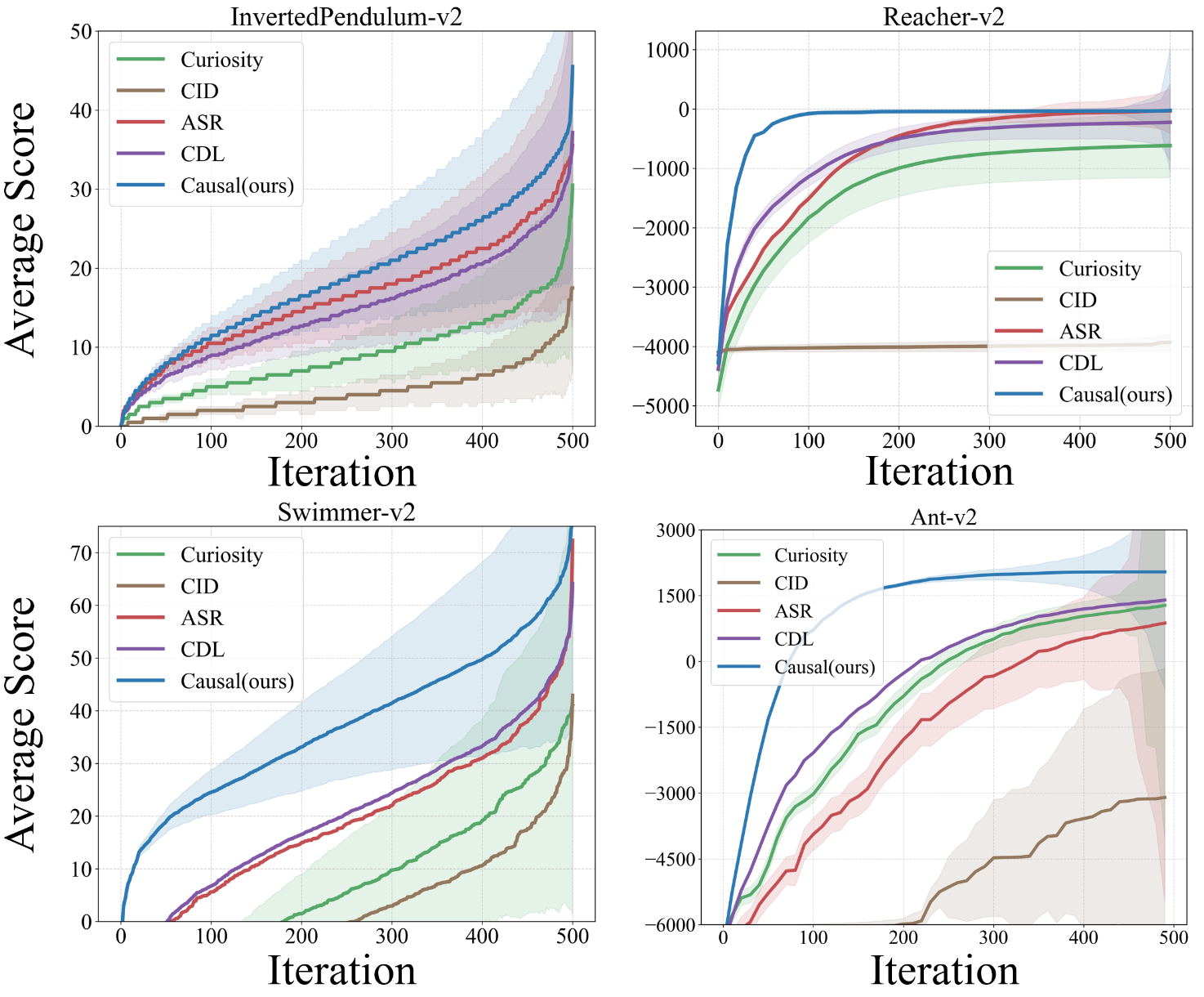}
}
  \caption{Application to some other MuJoCo tasks.}
  \label{fig:mujuco}
\end{figure*}

\begin{figure*}[htbp]
    \centering
    \subfigure[]{
	\includegraphics[width=.9\linewidth]{./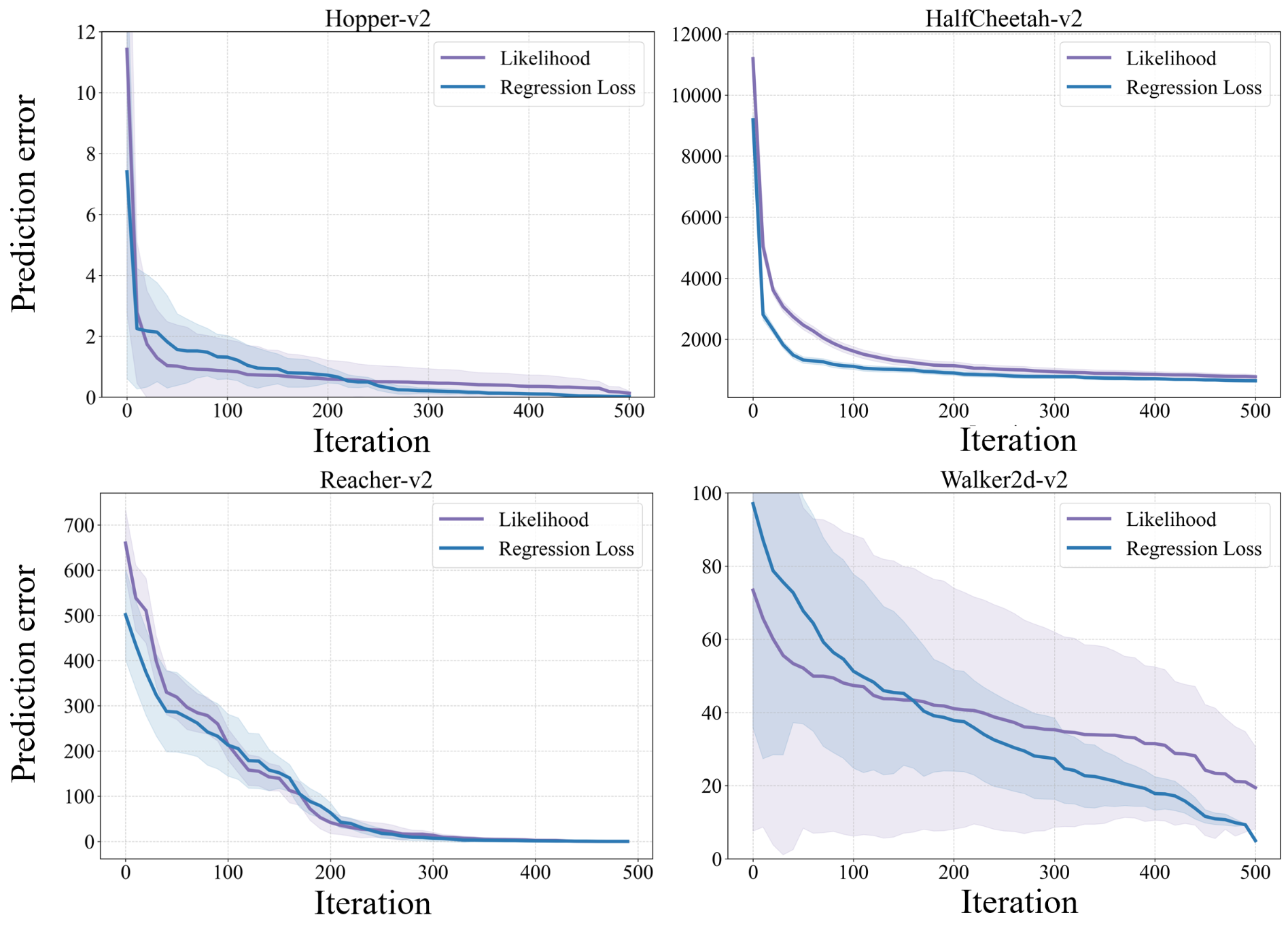}
}
  \subfigure[]{
	\includegraphics[width=.9\linewidth]{./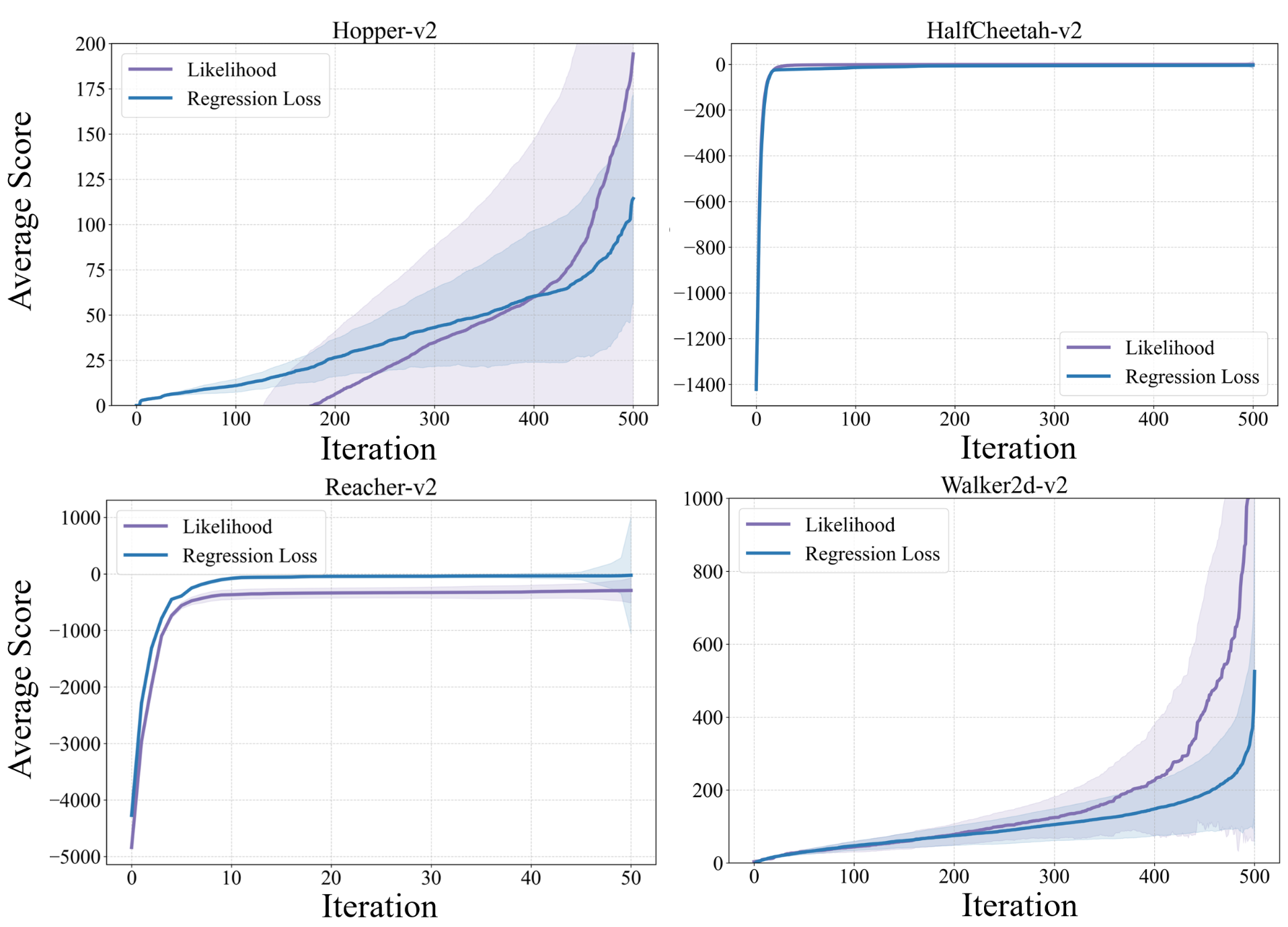}
}
    \caption{Performance of causal exploration with different forms of intrinsic rewards.}
    \label{fig:mujoco_change_model}
\end{figure*}

\end{document}